\theoremstyle{plain}
\newtheorem{assumption}{Assumption}
\newtheorem{definition}{Definition}
\newtheorem{lemma}{Lemma}
\newtheorem{theorem}{Theorem}
\newtheorem{corollary}{Corollary}
\newtheorem{remark}{Remark}
\newenvironment{thmbis}[1]
{%
\addtocounter{theorem}{-1}%
\begin{theorem}}
{\end{theorem}}
\def\BibTeX{{\rm B\kern-.05em{\sc i\kern-.025em b}\kern-.08em
    T\kern-.1667em\lower.7ex\hbox{E}\kern-.125emX}}
\begin{document}
\title{Depersonalized Federated Learning: Tackling Statistical Heterogeneity by Alternating Stochastic Gradient Descent}
\author{\IEEEauthorblockN{Yujie Zhou$^{1, 2, 3}$, Zhidu Li$^{1, 2, 3}$, Tong Tang$^{1, 2, 3}$, Ruyan Wang$^{1, 2, 3}$}\\
\IEEEauthorblockA{
$^{1}$ Chongqing University of Posts and Telecommunications, School of Communication and Information Engineering, China\\
$^{2}$ Advanced Network and Intelligent Interconnection Technology Key Laboratory of Chongqing Education Commission of China \\
$^{3}$ Key Laboratory of Ubiquitous Sensing and Networking in Chongqing, China\\
Email: lizd@cqupt.edu.cn}
\thanks{This work was supported in part by the National Natural Science Foundation of China under grants 61901078, 61871062,61771082 and U20A20157, and in part by Natural Science Foundation of Chongqing under grant cstc2020jcyj-zdxmX0024, and in part by  University Innovation Research Group of Chongqing under grant CXQT20017.}
       }
% \author{\\ \\ \\ \\ \\ \\ \\}

\maketitle
\thispagestyle{empty}
\begin{abstract}
    Federated learning (FL), which has gained increasing attention recently, enables distributed devices to train a common machine learning (ML) model for intelligent inference cooperatively without data sharing.
    However, problems in practical networks, such as non-independent-and-identically-distributed (non-iid) raw data and limited bandwidth, give rise to slow and unstable convergence of the FL training process.
    To address these issues, we propose a new FL method that can significantly mitigate statistical heterogeneity through the \emph{depersonalization mechanism.}
    Particularly, we decouple the global and local optimization objectives by alternating stochastic gradient descent, thus reducing the accumulated variance in local update phases to accelerate the FL convergence.
    Then we analyze the proposed method in detail to show the proposed method converging at a sublinear speed in the general non-convex setting.
    Finally, numerical results are conducted with experiments on public datasets to verify the effectiveness of our proposed method.
\end{abstract}
\begin{IEEEkeywords}
    Federated learning, depersonalization mechanism, statistical heterogeneity, convergence analysis
\end{IEEEkeywords}

\section{Introduction}
Due to a tremendous amount of data in edge devices, machine learning (ML) as a data-driven technology is generally used to enhance the intelligence of applications and networks \cite{wang2019edge, zhou2019edge}.
However, traditional ML requiring centralized training is unsuitable for the scenario because of privacy concerns and communication costs in raw data transmission.
Thus, as a distributed optimization paradigm, federated learning (FL), is designed to train ML models across multiple clients while keeping data decentralized.

To train ML models distributively, we can directly use the classical Parallel-SGD \cite{dekel2012optimal}, i.e., each client calculates the local stochastic gradient to the central server for getting the aggregated gradient at each iteration. Nevertheless, performing the procedure still leads to unaffordable communication costs, especially in the case of training large primary models such as deep neural networks.
Then to reduce the costs, a popular algorithm FedAvg \cite{FedAvg} was proposed, which means training individual models via several local SGD steps and uploading them in place of gradients to the central server in aggregation.
Despite FedAvg successfully reducing the communication overhead several times of Parallel-SGD, some key challenges emerge in deploying the framework:
(i) As massive clients may join in an FL training process, it is impractical for communication links to support all nodes to upload data simultaneously. 
(ii) As participators come from various regions, data on all clients are usually non-independent-and-identically-distributed (non-iid, known as statistical heterogeneity).
Recently, some efforts have been devoted to analyzing and improving FL (with (i) partial communication, a.k.a. client scheduling) performance on (ii) non-iid data.
Works \cite{koloskova2020unified, FAC, AFL} studied on FedAvg convergence.
Then \cite{nguyen2020fast, FedProx, SCAFFOLD, TACS} proposed FedAvg-based methods for incremental performance enhancement by update-rule or sampling policy modifications.
For instance, in \cite{FedProx}, the proposed FedProx introduced a proximal operator to obtain surrogate local objectives to tackle the heterogeneity problem empirically.
Then unlike the above works that focus on global performance improvement, other studies \cite{li2021ditto, t2020personalized, 9766407} tended to generate a group of personalized FL models in place of a single global model for all clients on non-iid data to ensure fairness and stylization.
For example, in \cite{li2021ditto}, the authors proposed a common personalized FL framework with inherent fairness and robustness, and \cite{t2020personalized} raised a bi-level learning framework for extracting personalized models from the global model.

\emph{Note that extra local information is implicit in customized FL models generated by personalized FL approaches.} While utilizing this information may be beneficial to reduce the negative impact caused by (i) client sampling and (ii) statistical heterogeneity.
Thus in this paper, we are inspired to devise a new method to improve global FL performance that modifies the local-update-rule by reversely using model-customization techniques \cite{li2021ditto, t2020personalized, 9766407}.
To take advantage of this personalization information, we design a double-gradient-descent rule in the local update stage that each client generates two decoupled local models (rather than an original one) to separate the global update direction from the local one.
In particular, the personalized local model is obtained by directly optimizing the local objective, while the globalized local model is obtained by \emph{subtracting} the personalized local model from the original one.
Therefore, each sampled client can upload \emph{its globalized model in place of the original local one} to reduce the accumulated local deviations for convergence acceleration and stabilization. We summarize key contributions as follows:
\begin{itemize}
    \item We propose a novel method called FedDeper to improve the FL performance on non-iid data by the depersonalization update mechanism, which can be widely adapted to a variety of scenarios.
    \item We theoretically analyze the convergence performance of our proposed method for the personalized and aggregated models in the general non-convex setting.
    \item We provide relevant experimental results to evaluate the convergence performance of our proposed algorithm versus baselines and study the impact factors of convergence.
\end{itemize}

The remainder of this paper is organized as follows.
We start by discussing the impact of data heterogeneity on the canonical FedAvg method in Section II.
Then, we propose a new FedDeper method in Section III and analyze its convergence in Section IV.
Next, we present and discuss experimental results in Section V.
Finally, we conclude the paper in Section VI.

\section{Preliminaries and Backgrounds}
In an FL framework, for all participating clients (denoted by $\mathcal{N}$ with the cardinal number $n:= |\mathcal{N}|$), we have the following optimization objective:
\begin{equation}\label{1.global_obj}
    \min_{\bm{x}\in \mathbb{R}^d} f(\bm x) := \frac{1}{n}\sum\nolimits_{i \in \mathcal{N}} f_i(\bm x)
\end{equation}
where $d$ denotes the dimension of the vector $\bm x$, and $f_i(\bm x) := {\mathbb{E}}_{\bm {\vartheta}_i \sim D_i}[f(\bm x; \bm {\vartheta}_i)]$ represents the local objective function on each client $i$.
Besides, $f_i$ is generally the loss function defined by the local ML model, and $\bm \vartheta_i$ denotes a data sample belonging to the local dataset $D_i$.
In this paper, we mainly deal with Problem (\ref{1.global_obj}) \cite{nguyen2020fast, FedProx, SCAFFOLD}, and all the results can be extended to the weighted version by techniques in \cite{FAC, TACS}.
We depict a round of the typical algorithm FedAvg to solve (\ref{1.global_obj}) as three parts: In the $k$-th round, (i) {Broadcasting:}
The server uniformly samples a subset of $m$ clients (i.e.,  $\mathcal{U}^k \subseteq \mathcal{N}$ with $m := |\mathcal{U}^k| \leq n, \forall k \in \{0,1,...,K-1\}$ for any integer $K \geq 1$) and broadcasts the aggregated global model $\bm x^k$ to client $i \in \mathcal{U}^k$.
(ii) {Local Update:}
Each selected client $i$ initializes the local model $\bm {v}_{i,0}^k$ as $ \bm x^k$ and then trains the model by performing stochastic gradient descent (SGD) with a step size $\eta$ on $f_i(\cdot)$,
\begin{equation}\label{2.fedavg.localupdate}  
    \bm {v}_{i,j+1}^k \gets \bm {v}_{i,j}^k - \eta {g_i}(\bm {v}_{i,j}^k),\ \forall j \in \{0,1,...,\tau-1\},
\end{equation}
where $\bm {v}_{i,j}^k$ denotes the updated local model in the $j$-th step SGD and ${g_i}(\cdot)$ represents the stochastic gradient of $f_i(\cdot)$ w.r.t. $\bm {v}_i$.
While the number of local steps reaches a certain threshold $\tau$, client $i$ will upload its local model to the server.
(iii) {Global Aggregation:}
The server aggregates all received local models to derive a new global one for the next phase,
\begin{equation}\label{2.fedavg.aggregation}  
    \begin{aligned}
        &\bm {x}^{k+1} \gets \frac{1}{m} \sum\nolimits_{i \in \mathcal{U}^k}\bm{v}^{k}_{i,\tau}. \\
    \end{aligned}
\end{equation}  
We complete the whole process when the number of communication rounds reaches the upper limit $K$, and obtain a global model trained by all participating clients.

Note that the stochastic gradient ${g_i}(\cdot)$ in Process (\ref{2.fedavg.localupdate}) can be more precisely rewritten as ${g_i}(\cdot) = \nabla f(\cdot; \bm {\vartheta}_i)$ with $\bm {\vartheta}_i \sim D_i$.
Since the high heterogeneity, local datasets ${D_i}_{i\in \mathcal{N}}$ obey unbalanced data distributions, and the corresponding generated gradients are consequently different in expectation:
\begin{equation}\label{2.fedavg.expgraddev}  
    \begin{aligned}
        &{\mathbb{E}}_{\bm {\vartheta}_i \sim D_i}[f(\cdot; \bm {\vartheta}_i)] \neq {\mathbb{E}}_{\bm {\vartheta}_j \sim D_j}[f(\cdot; \bm {\vartheta}_j)], \ {\forall i,j \in \mathcal{N}, i \neq j}.\\
    \end{aligned}
\end{equation}  
That means performing SGD (\ref{2.fedavg.localupdate}) with (\ref{2.fedavg.expgraddev}) leads to each client tending to find its local solution $\bm v_i^*$ with $\nabla f_i(\bm v_i^*) = 0$ deviating from the global one $\bm x^*$ with $\nabla f(\bm x^*) = 0$, where always holds the optimization objective inconsistency $ \cap_{i\in\mathcal{N}}\ker \nabla f_i = \varnothing,\ \forall i \in \mathcal{N}$ hence resulting in slow convergence.
Moreover, in practical deployed FL frameworks, the number of participators is always much large while the bandwidth or communication capability of the server is limited, i.e., only a small fraction of clients can be selected to join a training round: $m \ll n$.
This fact (partial communication) aggravates the inconsistency of local models, thus further leading to unreliable training and poor performance.

\section{Federated Learning with Depersonalization}
\begin{figure}[t]
    \centering
    \includegraphics[width=1\columnwidth]{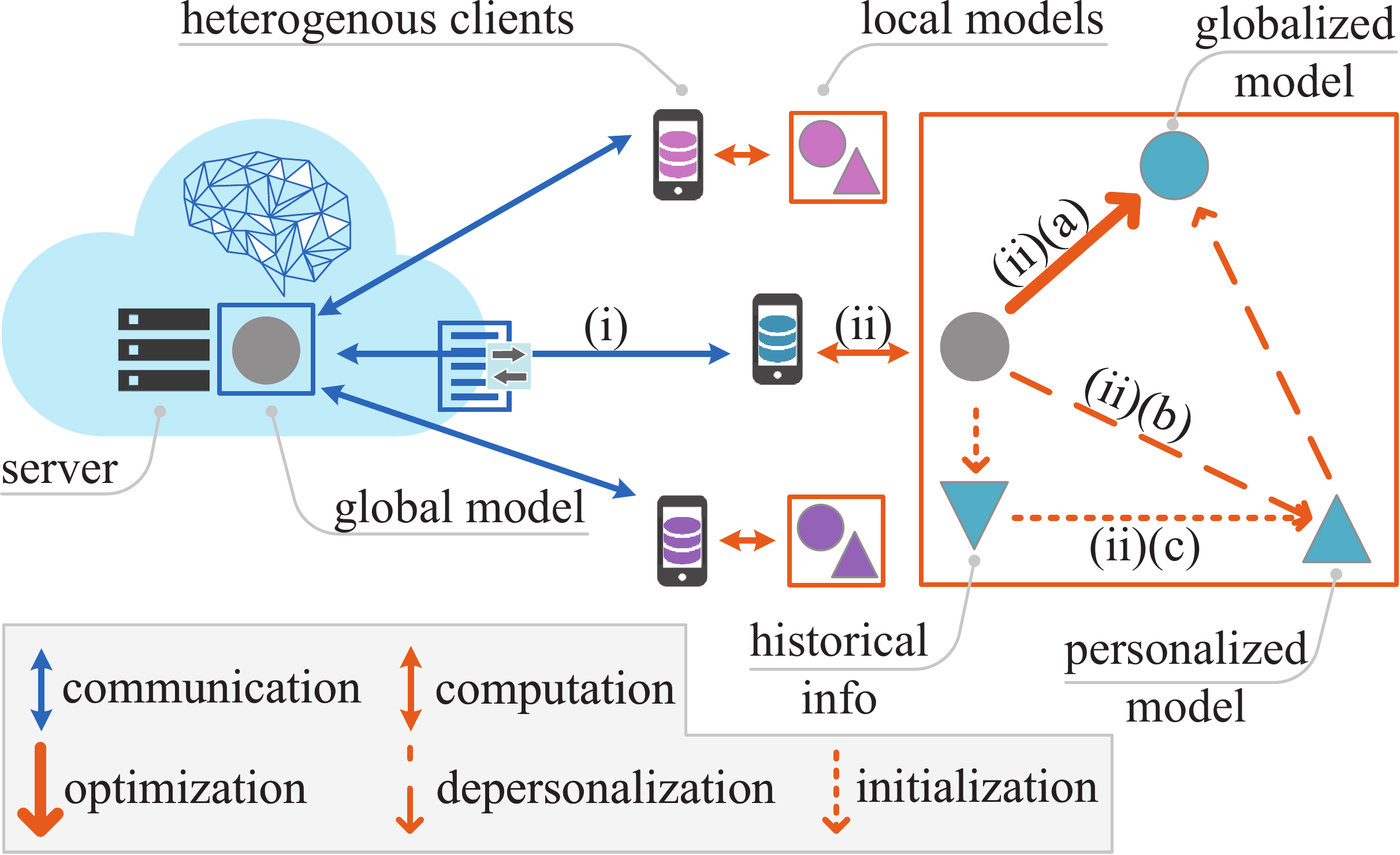}
    \caption{Federated learning with depersonalization: (i) communication (broadcasting \& aggregating), (ii) computation (local updating): (a) optimization, (b) depersonalization, and (c) initialization. Indeed, mechanism (a) integrates (b) which integrates (c), as (a) $\supset$ (b) $\supset$ (c). }\label{sys}
\end{figure}
To alleviate the negative impact of non-iid data and partial communication on FL, we propose a new Depersonalized FL (FedDeper) algorithm.
In brief, we aim to generate local approximations of the global model on clients, then upload and aggregate them in place of the original local models for stabilization and acceleration, as shown in Fig. \ref{sys}.

\subsection{Decoupling Global and Local Updating}
Recall that performing (\ref{2.fedavg.localupdate}) aims to minimize the local objective $f_i(\cdot)$ that usually disagrees with the global one (\ref{1.global_obj}) resulting in slow convergence.
To deal with this issue, we propose a new depersonalization mechanism to decouple the two objectives.
In particular, to better optimize the objective $f(\cdot)$, we induce a more \emph{globalized local model} in place of the original uploaded one to mitigate the local variance accumulation in aggregation rounds.
Different from Process (\ref{2.fedavg.localupdate}), we perform SGD on the surrogate loss function in each selected client $i$,
\begin{equation}\label{3.feddeper.localsurro}
    f_i^{\rho}(\bm y_i) := f_i(\bm y_i) + \frac{\rho}{2\eta}\|{\bm{v}_i}+\bm{y}_i-2\bm {x}\|^2,
\end{equation}  
where $\frac{\rho}{2\eta}$ is a constant for balancing the two terms, and ${\bm{v}_i}$ fixed in updating $\bm y_i$ denotes the  \emph{personalized local model} (the analogue of the original local model), which aims to reach the local optimum $\bm v_i^*$ via (\ref{2.fedavg.localupdate}).
Thus, we expect to obtain two models in the phase. 
The one $\bm v_i$ is kept locally for searching the local solution $\bm v_i^*$ while the other $\bm y_i$ estimating the global model locally (i.e., $\bm y_i^* \approx \bm x^*$) is uploaded to the aggregator to accelerate FL convergence.

\subsection{Using Local Information Reversely with Regularizer}
\begin{figure}[t]
    \centering
    \includegraphics[width=1\columnwidth]{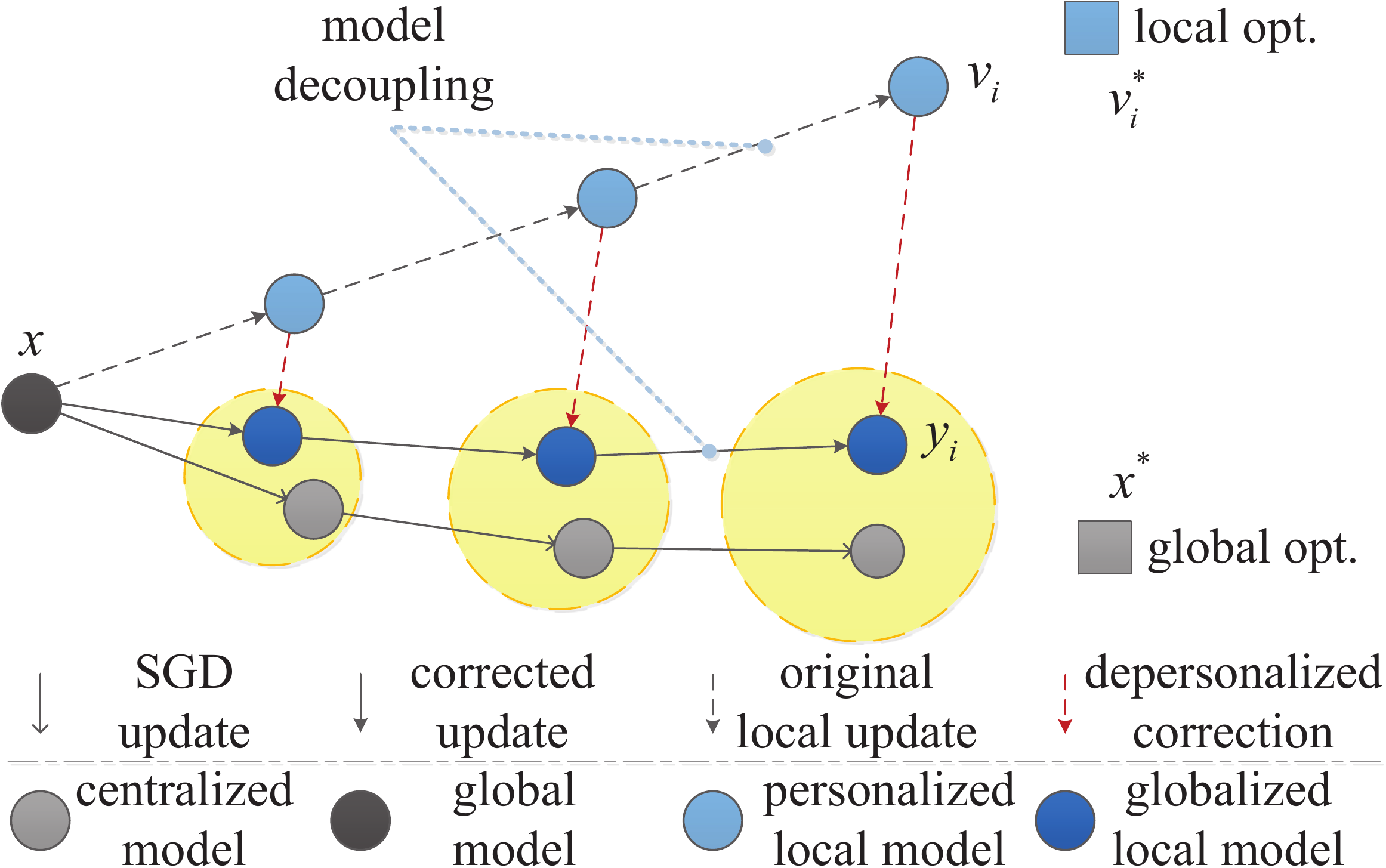}
    \caption{The local update phase of FedDeper: each selected client alternately updates globalized and personalized models in a round. The original local (or personalized) update aims to reach the local optimum $\bm v_i^*$ while the corrected update moves around the SGD update towards the global optimum $\bm x^*$ by reversely local update (depersonalization mechanism).}\label{localsys}
\end{figure}
As shown in Fig. \ref{localsys}, the globalized model $\bm y_i$ is updated by using the personalized one $\bm v_i$ reversely.
To minimize (\ref{3.feddeper.localsurro}), the value of $\bm y_i$ is restricted to a place slightly away from the local optimum with the regularizer $\|{\bm{v}_i}+\bm{y}_i-2\bm {x}\|^2$.
More specifically, we regard ${\bm v}_i - \bm x$ and $\bm y_i - \bm x$ as two directions in the update. 
Since ${\bm v}_i$ is a personalized solution for the client, we note that ${\bm v}_i - \bm x$ contains abundant information about local deviations.
To avoid introducing overmuch variance, we give a penalty to term $\bm y_i - \bm x$ that reflects $\bm y_i$ to the opposite direction of ${\bm v}_i - \bm x$.
Nevertheless, in suppressing bias with the regularizer $\|{\bm{v}}_i+\bm{y}_i-2\bm {x}\|^2$, we also eliminate the global update direction implied in ${\bm v}_i - \bm x$, which further interprets the necessity of carefully tuning on $\rho, \eta$ (trade off variance reduction and convergence acceleration).

\subsection{Retaining Historical Information for Personalized Model}
In the current local update stage, model $\bm y_i$ is initialized as the received global model $\bm x$ while $\bm {v}_i$ is initialized as the trained $\bm y_i$ in the previous stage.
And then they are updated alternately by first-order optimizers, i.e., each selected client $i$ performs SGD on $f_i(\cdot)$ as (\ref{2.fedavg.localupdate}) to obtain a personalized local model and on (\ref{3.feddeper.localsurro}) to obtain a locally approximated globalized model, respectively.
However, this initialization policy results in the new $\bm v_i$ forgetting all accumulated local information contained in the previous $\bm v_i$.
To make the best use of the historical models, we let $\bm {v}_i$ partially inherit the preceding value, i.e.,
\begin{equation}\label{3.feddeper.mix}
    {\bm{v}}_{i,0}^{k+1}\gets (1-\lambda){\bm{v}}_{i,\tau}^{k} + \lambda\bm{y}_{i,\tau}^k,
\end{equation}  
where ${\bm{v}}_{i,\tau}^{k}$, $\bm{y}_{i,\tau}^k$ are trained models in the $k$-th round, ${\bm{v}}_{i,0}^{k+1}$ is the initial model in the $k+1$-th round, and $\lambda \in [\frac{1}{2},1]$ is the mixing rate controlling the stock of local deviation information. 
To be specific, $\lambda$ limits the distance between the initial $\bm v_i$ and $\bm y_i$ within a certain range to avoid destructively large correction generated by $\|{\bm{v}_i}+\bm{y}_i-2\bm {x}\|^2$ since monotonically increasing difference between $\bm v_i$ and $\bm x$ (e.g., $\|\bm v_i -\bm x \|$) in updating.
\begin{remark}
    {\rm
    If $\lambda$ is set in the defined finite interval $[\frac{1}{2}, 1]$, we claim there exists suitable $\eta$, $\rho$ enable the global model $\bm x$ to converge to the global optimum.
    }
\end{remark}

\addtolength{\topmargin}{0.01in}
\subsection{Procedure of FedDeper and Further Discussion}
\begin{algorithm}[t]
    \caption{FedDeper: Depersonalized Federated Learning}\label{alg}
    {\bf Input:}
    learning rate $\eta$, penalty $\rho$, mixing rate $\lambda$, local step $\tau$, total round $K$, initialized models $\bm {x}^0=\bm {y}_{0,0}^0 =\bm {v}_{0,0}^0$
    % {\bf Output:}
    % final global model $ \bm {x}^{*} $
    \begin{algorithmic}[1]
    \For {each round $ k = 0,1,...,K-1 $}
        \State sample clients $\mathcal{U}^k \subseteq \mathcal{N}$ uniformly
        \State \textbf{send} $\bm {x}^k$ to selected clients $i \in \mathcal{U}^k$
        \For  {each client $i \in \mathcal{U}^k$ in parallel}
        \State initialize $ \bm {y}_{i,0}^k \gets \bm {x}^k$
        \For {$ j=0,1,...,\tau-1 $}
            \State $\bm {y}_{i,j+1}^k \gets \bm {y}_{i,j}^k - \eta {g_i^{\rho}}(\bm y_{i,j}^k)$\label{alg.local1}
            \State ${\bm{v}}_{i,j+1}^k \gets {\bm{v}}_{i,j}^k  - \eta {g_i}({\bm v}_{i,j}^k)$\label{alg.local2}
        \EndFor
        \State ${\bm{v}}_{i,0}^{k+1}\gets (1-\lambda){\bm{v}}_{i,\tau}^{k} + \lambda\bm{y}_{i,\tau}^k$ \label{alg.mvavg}
        \State \textbf{send} $\bm {y}_{i,\tau}^k - \bm {x}^k$ to server
        \EndFor
        \State each client $i \in  \complement_\mathcal{N}\mathcal{U}^k$ updates $\bm {v}_{i,0}^{k+1} \gets \bm {v}_{i,0}^{k}$ \label{alg.else}
        \State $\bm {x}^{k+1} \gets \bm{x}^{k} + \frac{1}{|\mathcal{U}^k|} \sum\nolimits_{i \in \mathcal{U}^k}(\bm{y}^{k}_{i,\tau} - \bm{x}^{k})$ \label{alg.agg}
    \EndFor
    \end{algorithmic}
\end{algorithm}
The proposed method is summarized as Algorithm \ref{alg}.
In Lines \ref{alg.local1}-\ref{alg.local2}, we update the globalized local model $\bm y$ and the personalized one ${\bm v}$ alternately.
Line \ref{alg.local1} shows the $j$-th step of local SGD where ${\bm v}$ is involved in the stochastic (mini-batch) gradient ${g_i^{\rho}}$ of $f_i^{\rho}$.
Line \ref{alg.local2} shows a step of SGD for approaching the optimum of the local objective. 
In Line \ref{alg.mvavg}, we initialize the personalized model ${\bm v}$ for the next round of local update with the mixing operation (\ref{3.feddeper.mix}).
In Line \ref{alg.else}, client $i \in \complement_\mathcal{N}\mathcal{U}^k$ skips the current round and only updates superscripts of variables.
In Line \ref{alg.agg}, the server receives and aggregates globalized local models from selected clients.

The proposal of the regularizer is inspired by FedProx. 
More concretely, the proximal operator $\|\bm y - \bm x\|^2$ is applied to local solvers to impose restrictions on the deviation between global and local solutions in \cite{FedProx}.
Nevertheless, the measure is conservative that only finds an inexact solution near the previous global model $\bm x$.
In this regard, we modify the operator to move the restriction near a local prediction of the current global model so as to accelerate convergence. Then in this paper, the defined personalized model differs from the original local model defined in Expression (\ref{2.fedavg.localupdate}) because of their different initial policies.

\addtolength{\topmargin}{0.02in}
\section{Convergence Analysis}
In this section, we analyze the convergence performance of FedDeper.
To derive the pertinent result, we start by applying some common assumptions.
\begin{assumption}\label{A1}
    $\beta$-smooth: for any $ \bm{y}, \bm{y}^\prime \in \mathbb{R}^d$, there holds
    $$ f_i(\bm{y}) \leq f_i(\bm{y}^\prime) + \langle \nabla f_i(\bm{y}^\prime), \bm{y} -\bm{y}^\prime \rangle + \frac{\beta}{2}\|\bm{y} -\bm{y}^\prime\|^2.$$
\end{assumption}
\begin{assumption}\label{A2}
    Unbiased gradient \& bounded variance: ${g}_{i}$ is unbiased stochastic gradient, i.e.,
    $\mathbb{E}[{g}_{i}] = \nabla f_i$, and its variance is uniformly bounded, i.e.,
    $\mathbb{E}\|{g}_{i} - \nabla f_i\|^2 \leq \varsigma^2$.
\end{assumption}
\begin{assumption}\label{A3}
    Bounded dissimilarity: for any $\bm x \in \mathbb{R}^d$, there exists constants $B^2 \geq 1, G^2 \geq 0$ such that
    $$\frac{1}{n}\sum\nolimits_{i\in \mathcal{N}}\|\nabla f_i(\bm x)\|^2 \leq B^2\|\nabla f(\bm x)\|^2 + G^2.$$
\end{assumption}
All Assumptions \ref{A1}-\ref{A3} are wildly used in existing literatures \cite{koloskova2020unified,SCAFFOLD}.
% Especially, Assumption \ref{A3} describes the dissimilarities of local optimization objectives.
We now introduce the following to illustrate the convergence of our proposed algorithm\footnote{The full proof is included in https://arxiv.org/pdf/2210.03444.pdf.}.
\begin{theorem}\label{thm.deper} 
    Under Assumptions \ref{A1}-\ref{A3}, by choosing $\rho \leq \eta\beta$, $\eta\tau\beta \leq \min\{\frac{1}{144\tilde{B}^2}, \frac{1}{84\sqrt{2}\sqrt{l_p^1+l_p^2B^2+l_p^3\tilde{B}^2}}\}$, we have
    \begin{equation}\nonumber
        \begin{aligned}
            & \frac{1}{K}\sum\nolimits_{k=0}^{K-1}\mathbb{E}\|\nabla f(\bm x^{k})\|^2 \leq \frac{24\varGamma}{\eta\tau K} + 12 \eta\tau\beta \bigg(4 \tilde{G}^2 + \frac{\varsigma^2}{\tau m}\bigg) 
            \\ & + 24 \eta^2\tau^2\beta^2 \bigg( (1120 + \frac{160}{p}) \tilde{G}^2 +  (1548 + \frac{25}{2p} + \frac{97}{6}
            \\ & + \frac{75}{2}\frac{(1-p)^2}{p^2} ) G^2 + ({330 {{p}}} + \frac{{40}}{m{p}} + \frac{280}{m} + \frac{73}{12}) \frac{\varsigma^2 }{\tau}\bigg) 
            \\ & + 192 \eta^3\tau^3\beta^3 \bigg(3 G^2 +  \frac{\varsigma^2}{\tau}\bigg) + 96 \eta^4\tau^4\beta^4 \bigg(\frac{(3p + {20q} )\varsigma^2}{p\tau}
            \\ & + 12 G^2 \bigg) + 576 \eta^5\tau^5\beta^5 \bigg(4 G^2 + \frac{\varsigma^2}{\tau}\bigg)  + 5760 \eta^6\tau^6\beta^6 \frac{q\varsigma^2}{p\tau}
        \end{aligned}
    \end{equation}
    where $l_p^1:= \frac{15 (1-{{p}})^2}{49{{p}}^2}$, $l_p^2:= 1 + \frac{25}{3136p} + \frac{75(1-p)^2}{3136p^2}$, $l_p^3:= \frac{5}{7}+\frac{5}{49p}$. Besides, $p:= \frac{m}{n}$, $q:= 5+75{{p}} + \frac{15(1-{{p}})^2}{{{p}}}$, $ \varGamma := f(\bm x^0) - f(\bm x^{*}) $, $\tilde{B}^2 := 2B^2(\frac{1}{m}-\frac{1}{n})+1$, $\tilde{G}^2 := 2G^2(\frac{1}{m}-\frac{1}{n})$.
\end{theorem}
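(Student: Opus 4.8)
\emph{Proof sketch.}
The plan is to run a standard non-convex descent argument on the global objective $f$, carrying the extra bookkeeping forced by the two coupled local sequences $\bm y_i$ and $\bm v_i$. I would start from the $\beta$-smoothness of $f$ (Assumption~\ref{A1}) applied across one aggregation step,
$$f(\bm x^{k+1}) \le f(\bm x^{k}) + \langle \nabla f(\bm x^{k}),\, \bm x^{k+1}-\bm x^{k}\rangle + \tfrac{\beta}{2}\|\bm x^{k+1}-\bm x^{k}\|^2,$$
and substitute $\bm x^{k+1}-\bm x^{k} = \frac1m\sum_{i\in\mathcal U^k}(\bm y_{i,\tau}^k-\bm x^k)$ together with the unrolled globalized update $\bm y_{i,\tau}^k-\bm x^k = -\eta\sum_{j=0}^{\tau-1} g_i(\bm y_{i,j}^k) - \rho\sum_{j=0}^{\tau-1}(\bm v_{i,j}^k+\bm y_{i,j}^k-2\bm x^k)$. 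Taking conditional expectation over the sampling of $\mathcal U^k$ and the mini-batch noise and invoking Assumption~\ref{A2} converts the stochastic gradients into true gradients plus variance remainders; the inner product then splits into a dominant descent term proportional to $-\eta\tau\|\nabla f(\bm x^k)\|^2$ and error terms from (a) the drift of $\bm y_{i,j}^k$ away from $\bm x^k$, (b) the regularizer contribution carrying the personalized deviation $\bm v_{i,j}^k-\bm x^k$, and (c) the partial-participation mismatch between the $m$-client average and the full $n$-client average.

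The bulk of the work is to control the two drift families $\mathbb{E}\|\bm y_{i,j}^k-\bm x^k\|^2$ and $\mathbb{E}\|\bm v_{i,j}^k-\bm x^k\|^2$. For each I would unroll the local recursion, apply $\beta$-smoothness and Young's inequality, and use Assumption~\ref{A3} to trade the per-client gradient sum $\frac1n\sum_i\|\nabla f_i\|^2$ for $B^2\|\nabla f\|^2+G^2$; this is where $B^2,G^2$ and their partial-participation analogues $\tilde B^2,\tilde G^2$ enter. The subtlety is the coupling: the $\bm y$-drift recursion contains the $-\rho(\bm v_{i,j}^k+\bm y_{i,j}^k-2\bm x^k)$ term, so the globalized drift is bounded in terms of the personalized drift and vice versa, and the constraint $\rho\le\eta\beta$ is precisely what keeps this regularizer feedback a lower-order perturbation rather than a destabilizing one. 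The sampling-without-replacement variance of the aggregate produces the $\bigl(\tfrac1m-\tfrac1n\bigr)$ factors hidden in $\tilde B^2,\tilde G^2$, as well as the $\tfrac{\varsigma^2}{\tau m}$ term in the bound.

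The main obstacle I anticipate is that, unlike $\bm y_i$, the personalized model $\bm v_i$ is never reset to $\bm x^k$: by the mixing rule \eqref{3.feddeper.mix} its initial deviation $\mathbb{E}\|\bm v_{i,0}^{k}-\bm x^{k}\|^2$ is inherited across rounds and could in principle accumulate without bound. The key step is therefore a cross-round recursion for this inherited deviation. Writing $\bm v_{i,0}^{k+1}-\bm x^{k+1} = (1-\lambda)(\bm v_{i,\tau}^k-\bm x^{k}) + \lambda(\bm y_{i,\tau}^k-\bm x^{k}) - (\bm x^{k+1}-\bm x^{k})$ and exploiting that the regularizer drives $\bm y_{i,\tau}^k-\bm x^k$ to point roughly opposite to $\bm v_{i,\tau}^k-\bm x^k$, the choice $\lambda\in[\tfrac12,1]$ makes the map $\mathbb{E}\|\bm v_{i,0}^{k}-\bm x^{k}\|^2 \mapsto \mathbb{E}\|\bm v_{i,0}^{k+1}-\bm x^{k+1}\|^2$ a contraction up to drift and noise, so the inherited deviation stays uniformly bounded in $k$ (this is the content of the earlier Remark). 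Unrolling this geometric recursion is what generates the cascade of higher-order terms in $\eta\tau\beta$, up to order six, appearing in the statement.

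Finally I would collect the pieces: substitute the drift and inherited-deviation bounds into the descent inequality, sum over $k=0,\dots,K-1$, and telescope the left-hand side to $f(\bm x^0)-f(\bm x^K)\le\varGamma$. The remaining step is to impose the stated step-size conditions $\rho\le\eta\beta$ and $\eta\tau\beta\le\min\{\cdots\}$ so that the coefficient multiplying $\frac1K\sum_k\mathbb{E}\|\nabla f(\bm x^k)\|^2$ on the error side stays below half the dominant descent coefficient; moving it to the left, dividing through by $\eta\tau K$, and tracking the accumulated constants yields the claimed bound with leading term $\tfrac{24\varGamma}{\eta\tau K}$ and a noise floor organized by increasing powers of $\eta\tau\beta$. $\hfill\square$
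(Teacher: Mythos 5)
Your proposal follows essentially the same route as the paper: a smoothness-based one-round descent inequality, drift bounds for both the globalized sequence $\bm y_{i,j}^k$ and the personalized sequence $\bm v_{i,j}^k$, a cross-round recursion for the inherited deviation of $\bm v_{i,0}^k$ that is folded into a Lyapunov-type estimating sequence, and a final telescoping under the stated step-size conditions. The one point where your mechanism differs slightly from the paper's is the source of the contraction in that cross-round recursion: the paper gets the factor $1-\tfrac{p}{6}$ from the sampling probability combined with $\lambda\ge\tfrac12$ (via the coefficient $(1-p)(1+c)+2p(1-\lambda)^2$ with $c=\tfrac{p}{3(1-p)}$), absorbing the $\bm y_{i,\tau}^k-\bm x^k$ contribution separately as a small $O(\eta^2\tau^2)$ local-deviation term, rather than from any directional cancellation between $\bm y_i-\bm x$ and $\bm v_i-\bm x$ induced by the regularizer.
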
 
\begin{corollary}\label{thm.rate}
    In terms of Theorem \ref{thm.deper}, by choosing $\eta \leq (\frac{m}{\tau K})^\frac{1}{2}$, we have 
    \begin{equation}\nonumber
        \begin{aligned}
            & \frac{1}{K}\sum\nolimits_{k=0}^{K-1}\mathbb{E}\|\nabla f(\bm x^{k})\|^2 \leq \mathcal{O}\bigg(\frac{\varGamma+m{\tau}\tilde{G}^2 + {\varsigma^2}}{\sqrt{m \tau K}}\bigg) 
            \\ & + \mathcal{O}\bigg(\frac{m\tau \bar G^2 + \varsigma^2}{K}\bigg) + \mathcal{O}\bigg(\frac{(m\tau)^\frac{3}{2}\bar G^2}{K^\frac{3}{2}}\bigg) + \mathcal{O}\bigg(\frac{(m\tau)^2 \bar G^2}{K^2}\bigg)
            \\ & + \mathcal{O}\bigg(\frac{(m\tau)^\frac{5}{2} \bar G^2}{K^\frac{5}{2}}\bigg) + \mathcal{O}\bigg(\frac{(m\tau)^3 \varsigma^2}{K^3}\bigg) 
        \end{aligned}
    \end{equation}
    where $\mathcal{O}$ hides constants including $\beta$, and $\bar G^2:= G^2 + \frac{\varsigma^2}{\tau}$.
\end{corollary}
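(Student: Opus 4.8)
The plan is to treat Corollary~\ref{thm.rate} as a direct specialization of the master bound in Theorem~\ref{thm.deper}: once the step size is fixed at the prescribed order, every term collapses to a monomial in $m,\tau,K$ and the six $\mathcal{O}(\cdot)$ groups emerge by inspection. The single identity that drives everything is that, taking $\eta=(m/(\tau K))^{1/2}$, one has $\eta\tau=(m\tau/K)^{1/2}$, so that the $j$-th polynomial layer of the theorem, which carries a factor $(\eta\tau\beta)^{j}$, scales as $\beta^{j}(m\tau)^{j/2}K^{-j/2}$ for $j=1,\dots,6$, while the leading optimization term $24\varGamma/(\eta\tau K)$ becomes $24\varGamma/\sqrt{m\tau K}$.

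Before substituting I would first check feasibility: the choice $\eta\le(m/(\tau K))^{1/2}$ must be compatible with the restriction $\eta\tau\beta\le\min\{\frac{1}{144\tilde{B}^2},(84\sqrt{2}\sqrt{l_p^1+l_p^2B^2+l_p^3\tilde{B}^2})^{-1}\}$ imposed in Theorem~\ref{thm.deper}. Since $\eta\tau\beta\le\beta(m\tau/K)^{1/2}\to 0$ as $K\to\infty$, this restriction is met for all $K$ beyond an explicit threshold depending only on $\beta$, $\tilde{B}^2$, $B^2$ and the rational quantities $l_p^i$; I would record this ``$K$ sufficiently large'' proviso and then regard $\beta$, $p=m/n$, $\tilde{B}^2$ and the constants $l_p^1,l_p^2,l_p^3,q$ as fixed, so that they are legitimately absorbed into $\mathcal{O}(\cdot)$.

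The substitution itself is then term-by-term. The $\varGamma/(\eta\tau K)$ term together with the $j=1$ layer $12\eta\tau\beta(4\tilde{G}^2+\varsigma^2/(\tau m))$ yields $\mathcal{O}((\varGamma+m\tau\tilde{G}^2+\varsigma^2)/\sqrt{m\tau K})$, using $\eta\tau\beta\,\tilde{G}^2=\Theta(\sqrt{m\tau}\,\tilde{G}^2/\sqrt{K})$ and $\eta\tau\beta\,\varsigma^2/(\tau m)=\Theta(\varsigma^2/\sqrt{m\tau K})$. The layers $j=2,\dots,5$ each pick up $(\eta\tau)^{j}=(m\tau)^{j/2}K^{-j/2}$; after fusing the paired $G^2$ and $\varsigma^2/\tau$ contributions into $\bar{G}^2=G^2+\varsigma^2/\tau$ they collapse to $\mathcal{O}((m\tau)^{j/2}\bar{G}^2/K^{j/2})$, reproducing the second through fifth groups, while the $j=6$ layer $5760\,\eta^6\tau^6\beta^6 q\varsigma^2/(p\tau)$ gives $\mathcal{O}((m\tau)^3\varsigma^2/K^3)$ after noting $1/\tau\le 1$.

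I do not expect a genuine analytic obstacle; the statement is a scaling/bookkeeping computation. The points demanding care are: (i) correctly merging the three second-moment symbols, since $\tilde{G}^2=2G^2(1/m-1/n)=O(G^2/m)$ so that $m\tau\tilde{G}^2=O(\tau G^2)$ stays of the claimed order, whereas the interior $G^2$ and $\varsigma^2/\tau$ terms must be combined into $\bar{G}^2$; (ii) tracking the $m$-power reductions caused by the $\frac{1}{m}$- and $\frac{1}{mp}$-type coefficients in the $j=2$ layer, which cancel one factor of $m$ and are exactly what produce the standalone $\varsigma^2$ inside the second group; and (iii) verifying that every $p$-dependent rational coefficient (e.g.\ $\frac{1}{p}$, $\frac{(1-p)^2}{p^2}$, $\frac{1}{mp}$) is bounded under the fixed-$p$ convention so it is truly hidden by $\mathcal{O}$, while keeping the ``$K$ large enough'' requirement from the step-size constraint, which is what makes the dominant $\mathcal{O}(1/\sqrt{m\tau K})$ term the governing sublinear rate.
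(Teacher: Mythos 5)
Your proposal is correct and matches the paper's (implicit) treatment: the corollary is obtained exactly by substituting $\eta\tau=(m\tau/K)^{1/2}$ into each polynomial layer of Theorem~\ref{thm.deper} and absorbing the $\beta$- and $p$-dependent coefficients into $\mathcal{O}(\cdot)$, with the $1/m$-type coefficients in the quadratic layer producing the standalone $\varsigma^2/K$ term just as you note. Your added feasibility check against the step-size restriction (valid for $K$ beyond a constant threshold) is a sensible precision the paper leaves implicit.
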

\begin{remark}
    {\rm 
    Combining with Theorem \ref{thm.deper} and Corollary \ref{thm.rate}, we find the smoothness parameter $\beta$, stochastic variance $\varsigma^2$, the gradient dissimilarity ${G}^2$ and the sampling ratio $p= \frac{m}{n}$ are the dominant factors affecting the convergence rate.
    Note that sampling ratio $p$ contains in the crucial low-order term $12 \eta\tau\beta (4 \tilde{G}^2 + \frac{\varsigma^2}{\tau m})$ with $\tilde{G}^2|_{p=1} = 0$, mainly decides the impact degree of dissimilarity $G^2$ on the training in the dominant convergence rate $\mathcal{O}(\frac{1}{\sqrt{m \tau K}})$.
    Furthermore, penalty constant $\rho$ also implicitly influences the convergence in choosing learning rate $\eta$ due to the precondition $\rho \leq \eta\beta$.
    Besides, the corollary shows appropriately choosing $\eta$ for Theorem \ref{thm.deper} and ignoring high-order terms, the convergence bound can be scaled as $\mathcal{O}(\frac{1}{\sqrt{m \tau K}})$, which meets the sublinear rate similar to works on FedAvg and its variants \cite{SCAFFOLD, FedProx}. }
\end{remark}
\begin{theorem}\label{thm.per}
    Let $\frac{1}{n\tau K}\sum\nolimits_{i,j,k}(\cdot)$ average over all the indexes $i,j,k$, in terms of Theorem \ref{thm.deper}, (i) for any $\lambda \in [\frac{1}{2}, 1)$, we have 
    \begin{equation}\nonumber
        \begin{aligned}
            \frac{1}{n\tau K}\sum\nolimits_{i,j,k}\|\bm v_{i,j}^k - \bm x^*\|^2 \leq \mathcal{O}({\xi^{0}}) + \mathcal{O}(\epsilon),
        \end{aligned}
    \end{equation}
    and (ii) for $\lambda = 1$, we have
    \begin{equation}\nonumber
        \begin{aligned}
            \frac{1}{n\tau K}\sum\nolimits_{i,j,k}\|\bm v_{i,j}^k - \bm x^*\|^2 \leq \mathcal{O}(\epsilon),
        \end{aligned}
    \end{equation}
    where $\mathcal{O}$ hides all constants, $\xi^0 := \frac{1}{n\tau}\sum\nolimits_{i,j}\mathbb{E}\|{\bm v}_{i,j}^0-\bm x^0\|^2 $, and $\epsilon := \frac{1}{K}\sum\nolimits_{k=0}^{K-1}\mathbb{E}\|\nabla f(\bm x^{k})\|^2$.
\end{theorem}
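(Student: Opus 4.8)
The plan is to anchor the personalized model at the global model and split the error into a within-round piece, a start-of-round piece, and a global suboptimality piece: for each triple $(i,j,k)$,
$$\mathbb{E}\|\bm v_{i,j}^k - \bm x^*\|^2 \le 3\,\mathbb{E}\|\bm v_{i,j}^k - \bm v_{i,0}^k\|^2 + 3\,\mathbb{E}\|\bm v_{i,0}^k - \bm x^k\|^2 + 3\,\mathbb{E}\|\bm x^k - \bm x^*\|^2.$$
First I would control the within-round drift by unrolling Line~\ref{alg.local2}: telescoping $\bm v_{i,j}^k-\bm v_{i,0}^k = -\eta\sum_{l<j} g_i(\bm v_{i,l}^k)$, taking expectations, and using Assumptions~\ref{A2}--\ref{A3} to replace $\|\nabla f_i\|^2$ by $B^2\|\nabla f(\bm x^k)\|^2 + G^2$ plus the variance $\varsigma^2$. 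Under the step-size condition $\eta\tau\beta \le 1/(144\tilde B^2)$ this gives a bound of order $\eta^2\tau^2(B^2\|\nabla f(\bm x^k)\|^2+G^2)+\eta^2\tau\varsigma^2$, so that after averaging over $i,j,k$ the gradient part is absorbed into $\mathcal{O}(\epsilon)$ and the rest into lower-order constants.

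The core of the argument is a cross-round recursion for the start-of-round drift $P_k := \frac{1}{n}\sum_{i\in\mathcal{N}}\mathbb{E}\|\bm v_{i,0}^k - \bm x^k\|^2$. For a sampled client the mixing rule (\ref{3.feddeper.mix}) gives $\bm v_{i,0}^{k+1}-\bm x^{k+1} = (1-\lambda)(\bm v_{i,\tau}^k - \bm x^{k+1}) + \lambda(\bm y_{i,\tau}^k - \bm x^{k+1})$, while an unsampled client (Line~\ref{alg.else}) contributes $\bm v_{i,0}^k-\bm x^{k+1}$. I would expand both, apply Young's inequality to peel off the one-step global move $\bm x^{k+1}-\bm x^k$ (of order $\eta\tau$ by Line~\ref{alg.agg}) and the globalized-model drift $\|\bm y_{i,\tau}^k - \bm x^k\|^2$, and then take expectation over the uniform sampling of $\mathcal{U}^k$. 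The globalized-model drift is handled separately by unrolling Line~\ref{alg.local1} on the surrogate $f_i^{\rho}$, where the regularizer $\tfrac{\rho}{2\eta}\|\bm v_i+\bm y_i-2\bm x\|^2$ with $\rho\le\eta\beta$ keeps $\bm y_{i,\tau}^k$ within $\mathcal{O}(\eta\tau)$ of $\bm x^k$. The upshot is a recursion $P_{k+1} \le c\,P_k + (\text{noise and }\|\nabla f(\bm x^k)\|^2\text{ terms})$, whose contraction factor $c$ is governed by the mixing weight through the $(1-\lambda)$ coefficient on the retained $\bm v_{i,\tau}^k$ at sampled clients.

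Given the recursion, I would unroll it from $P_0 = \xi^0$ and average over $k=0,\dots,K-1$. For $\lambda\in[\tfrac12,1)$ the factor satisfies $c<1$, so the initial condition enters as $\frac{1}{K}\sum_k c^k\,\xi^0 = \mathcal{O}(\xi^0)$ while the accumulated noise and gradient terms sum to $\mathcal{O}(\epsilon)$, yielding claim~(i); for $\lambda=1$ the retained term vanishes identically, since $\bm v_{i,0}^{k+1}=\bm y_{i,\tau}^k$ is re-anchored at $\bm x^k$ every round and carries no memory of $\bm v^0$, so $c=0$ and the $\xi^0$ contribution disappears, giving claim~(ii). The main obstacle I anticipate is establishing a genuine contraction ($c<1$ in expectation) in the presence of client sampling: an unsampled client keeps its old $\bm v_{i,0}$ while $\bm x$ moves, contributing a coefficient $\ge 1$, so the sampled fraction $p=m/n$ must supply enough contraction through $(1-\lambda)$ to compensate, which is precisely where $\lambda\ge\tfrac12$ and its interplay with $p$ enter. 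A secondary subtlety is the third piece $\mathbb{E}\|\bm x^k-\bm x^*\|^2$: since only $\|\nabla f(\bm x^k)\|^2$ is controlled in the non-convex setting, bounding the global suboptimality by $\mathcal{O}(\epsilon)$ relies on relating distance to gradient norm, so I would either lean on the descent structure underlying Theorem~\ref{thm.deper} or fold this term into the same $\epsilon$ through the smoothness-based trajectory bound.
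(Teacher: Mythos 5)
Your route is essentially the paper's: you anchor $\bm v_{i,j}^k$ at $\bm x^k$, control the within-round SGD drift, set up a cross-round recursion for the start-of-round deviation $\frac{1}{n}\sum_i\mathbb{E}\|\bm v_{i,0}^k-\bm x^k\|^2$ (the paper's $\tilde\varphi^k$ in Lemma \ref{proof.cpd}), and extract a contraction from the interplay of the mixing weight $\lambda$ and the sampling ratio $p$. The paper packages this as a Lyapunov combination $\xi^k=\varphi^k+20(\frac{1}{p}-\frac{1}{8})\tilde\varphi^k$ and telescopes the averaged recursion (Lemma \ref{proof.auxiliary} with $\varepsilon=\eta\tau\beta$, yielding a residual $-\frac{1}{48}p\cdot\frac{1}{K}\sum_k\xi^k$) rather than unrolling a geometric factor $c^k$, but these are the same mechanism; your identification of where the contraction must come from (sampled clients resetting toward $\bm y_{i,\tau}^k$, with $(1-\lambda)^2$ small enough to beat the $(1-p)(1+c)$ coefficient from unsampled clients) matches the paper's coefficient $(1-p)(1+c)+2p(1-\lambda)^2$ with $c=\frac{p}{3(1-p)}$.

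Two caveats. First, your claim for part (ii) that $\lambda=1$ gives $c=0$ so the $\xi^0$ contribution ``vanishes identically'' is not right: as you yourself note one sentence later, unsampled clients keep $\bm v_{i,0}^{k+1}=\bm v_{i,0}^k$, so the contraction factor at $\lambda=1$ is $(1-p)(1+c)$, which is strictly positive; the memory of the initialization decays geometrically but does not disappear in one round. The paper's own justification of (ii) is only the remark that it is ``trivial with (i),'' implicitly relying on the fact that $\tilde\varphi^0=0$ under the algorithm's initialization so that $\xi^0=\varphi^0$ is itself of the order absorbed by $\mathcal{O}(\epsilon)$ under the Corollary \ref{thm.rate} step size; you would need that observation, not $c=0$. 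Second, your worry about $\mathbb{E}\|\bm x^k-\bm x^*\|^2$ is well founded and is not resolved by ``leaning on the descent structure'': the paper bounds $\frac{1}{K}\sum_k\|\bm x^k-\bm x^*\|^2\le\frac{4}{\beta^2}\epsilon$ by invoking smoothness, but smoothness gives $\|\nabla f(\bm x^k)\|\le\beta\|\bm x^k-\bm x^*\|$, which is the reverse inequality; upper-bounding distance to $\bm x^*$ by the gradient norm requires an additional condition (e.g.\ Polyak--{\L}ojasiewicz or quadratic growth) not listed among Assumptions \ref{A1}--\ref{A3}. So this step is a genuine gap, but it is one your plan shares with, and correctly diagnoses in, the paper's own proof.
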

\begin{remark}
    {\rm We here bound the gap between all personalized solutions $(\bm v_{i,j}^k)_{i,j,k}$ and the global optimum $\bm x^*$, and the result in case (i) shows the personalized model converges around $\bm x^*$ in average with the radius $\mathcal{O}({\xi^{0}})$ in terms of the initial distance.
    While in case (ii) we indicate that the behavior of the personalized model degenerates into the original local model: it can converge to the global optimum by choosing an infinitesimal learning rate as shown in Corollary \ref{thm.rate}.
    }
\end{remark}

\section{Performance Evaluation}
We first show the effect of crucial hyper-parameters on FL performance, including penalty $\rho$, mixing rate $\lambda$, local steps $\tau$, and communication rounds $K$.
Then, we conduct aggregated model $\bm x$ performance comparison experiments in both cross-silo ($n=10$) and cross-device ($n=100$) scenarios.
Finally, we investigate the personalized models' performance $(\bm v_i)_{i\in\mathcal{N}}$ in local testing.
\begin{figure}[t]
    \centering
    \subfigure[Effect of $\rho$]{
    \includegraphics[width=0.234\textwidth]{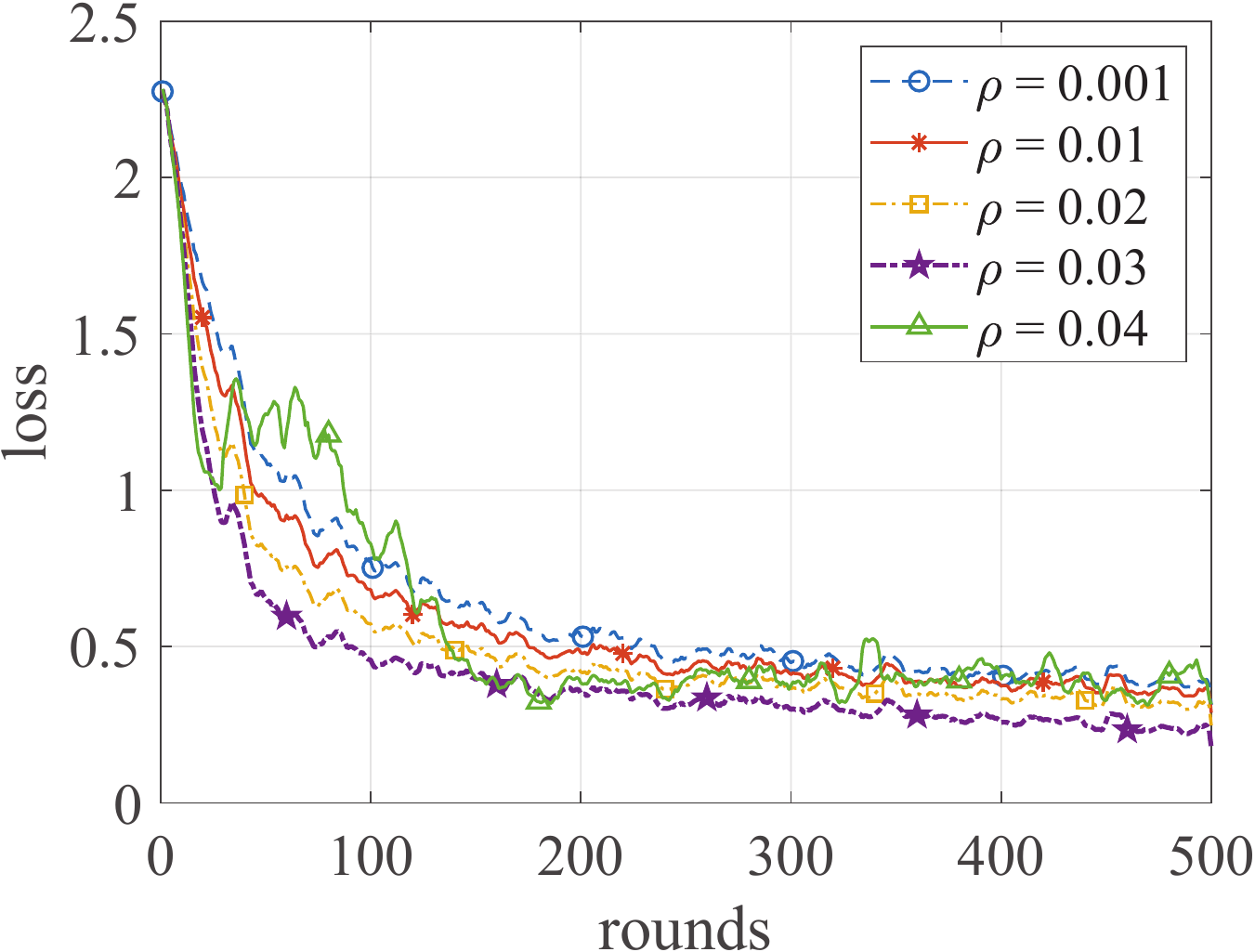}}
    \hfill
    \subfigure[Effect of $\lambda$]{
    \includegraphics[width=0.234\textwidth]{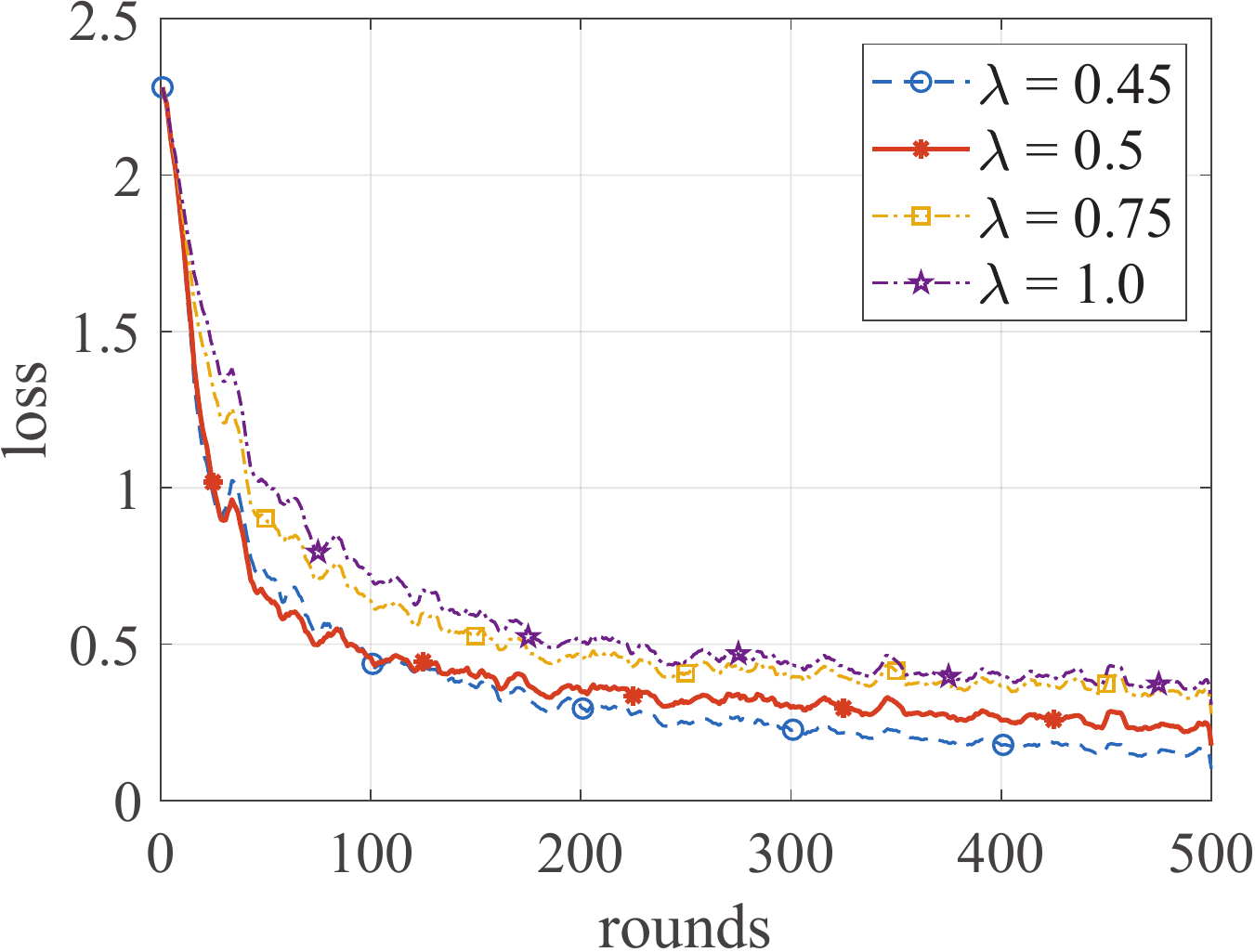}}
    \vfill
    \subfigure[Effect of $\tau$]{
    \includegraphics[width=0.234\textwidth]{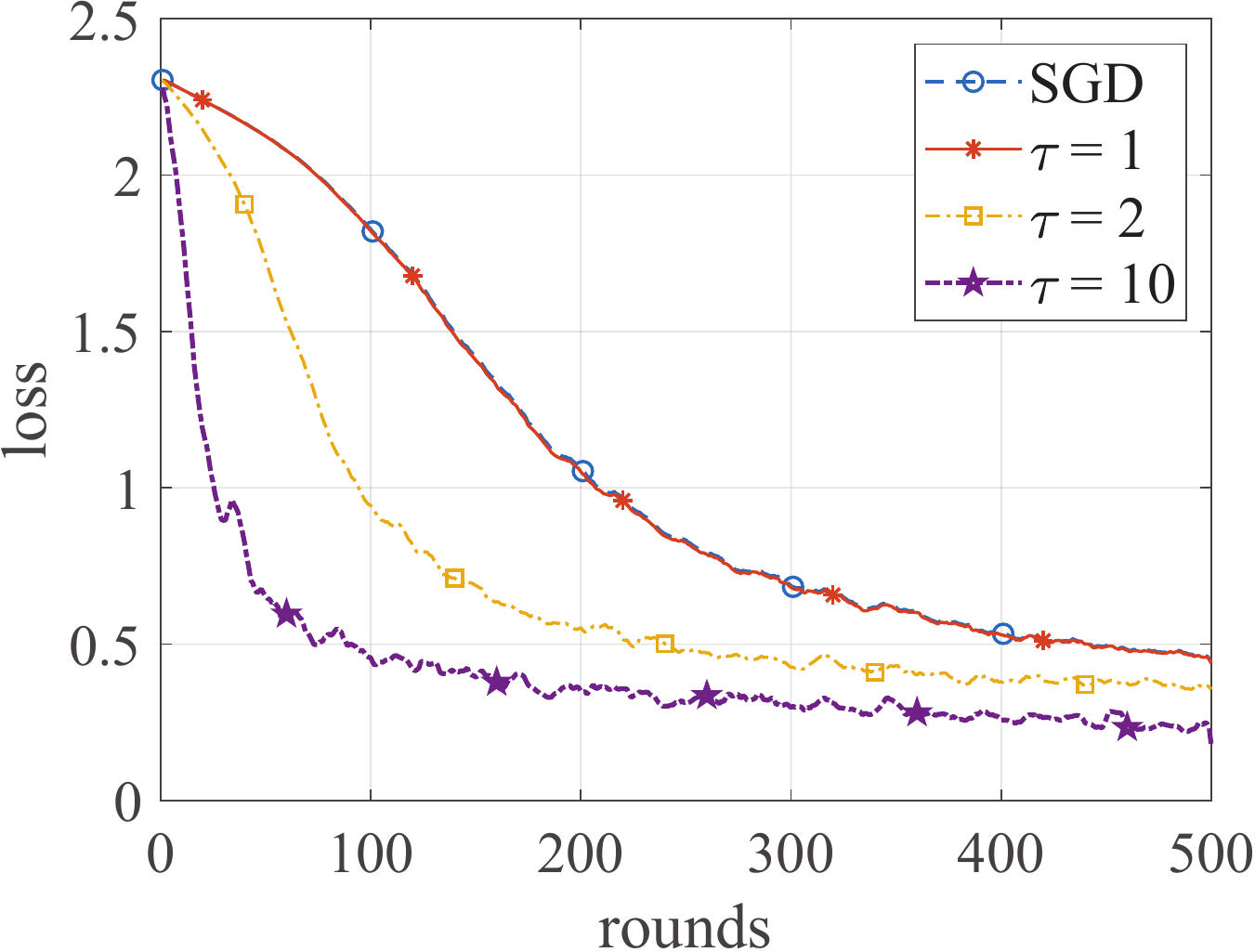}}
    \hfill
    \subfigure[Effect of $K$]{
    \includegraphics[width=0.234\textwidth]{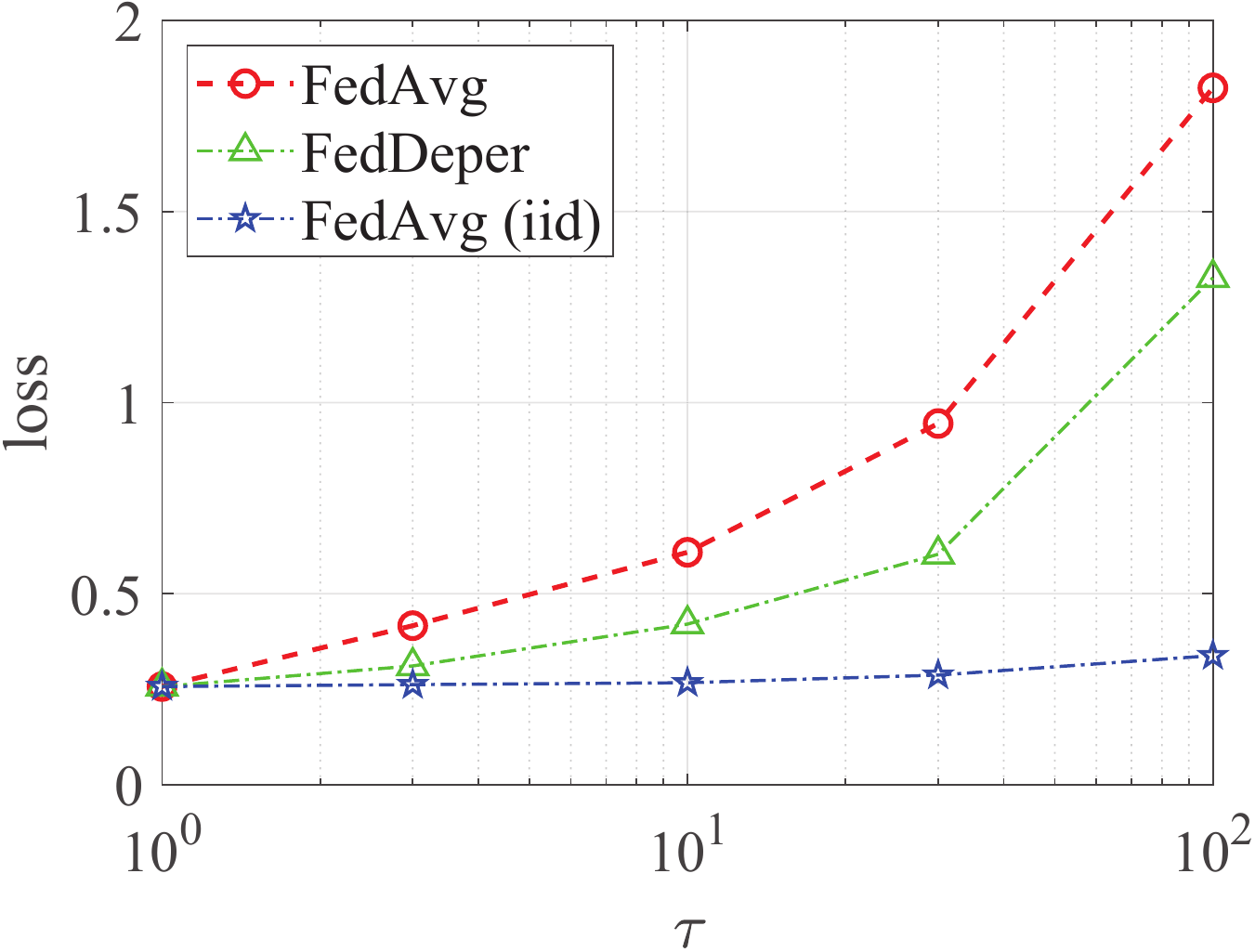}}
    \caption{Effect of Hyper-parameters: (i) using MNIST + MLP and the sampling rate $p= 0.5$ with $n = 10$. (ii) the value of local step $\tau=10$ in (a)(b). (iii) the total communication round $K =500$ in (a)(b)(c) while the total iteration $K\tau = 1500$ in (d). (iv) the penalty $\rho = 0.03$ in (b)(c)(d). (v) the mixing rate $\lambda = 0.5$ in (a)(c)(d).}\label{effe}
\end{figure}
\begin{figure}[t]
    \centering
    \subfigure[MLP \& $m = 5$]{
    \includegraphics[width=0.234\textwidth]{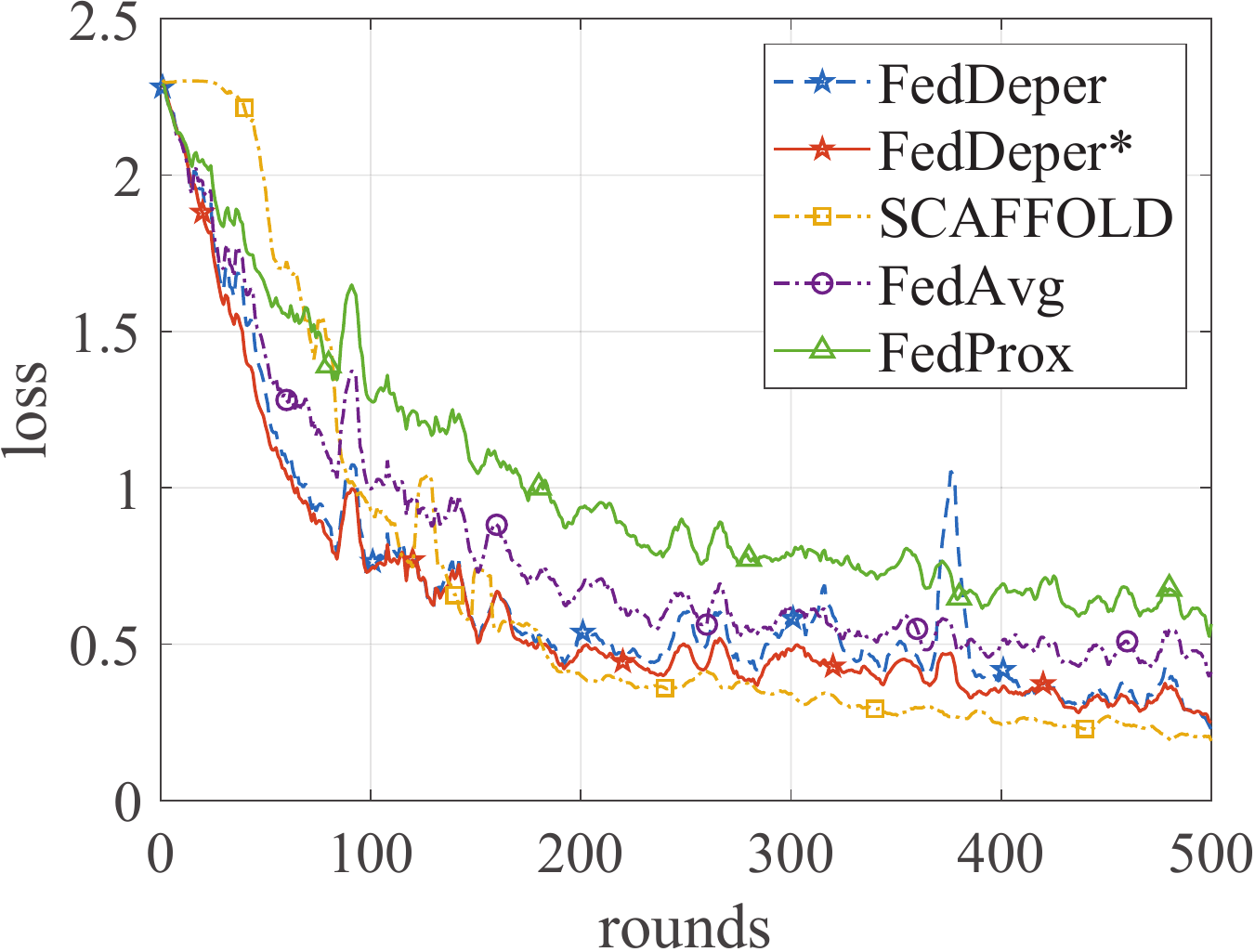}}
    \hfill
    \subfigure[CNN \& $m = 5$]{
    \includegraphics[width=0.234\textwidth]{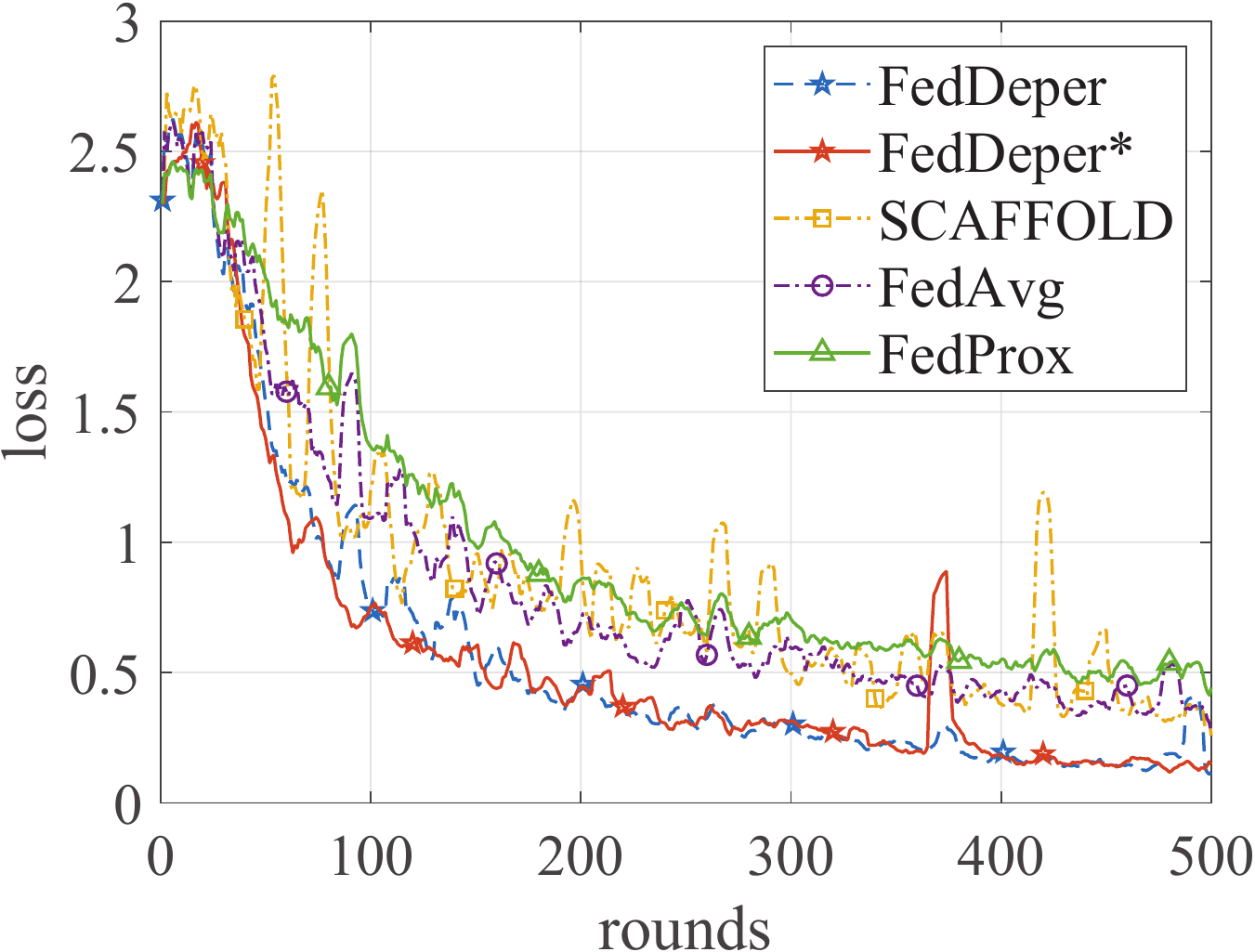}}
    \vfill
    \subfigure[MLP \& $m = 10$]{
    \includegraphics[width=0.234\textwidth]{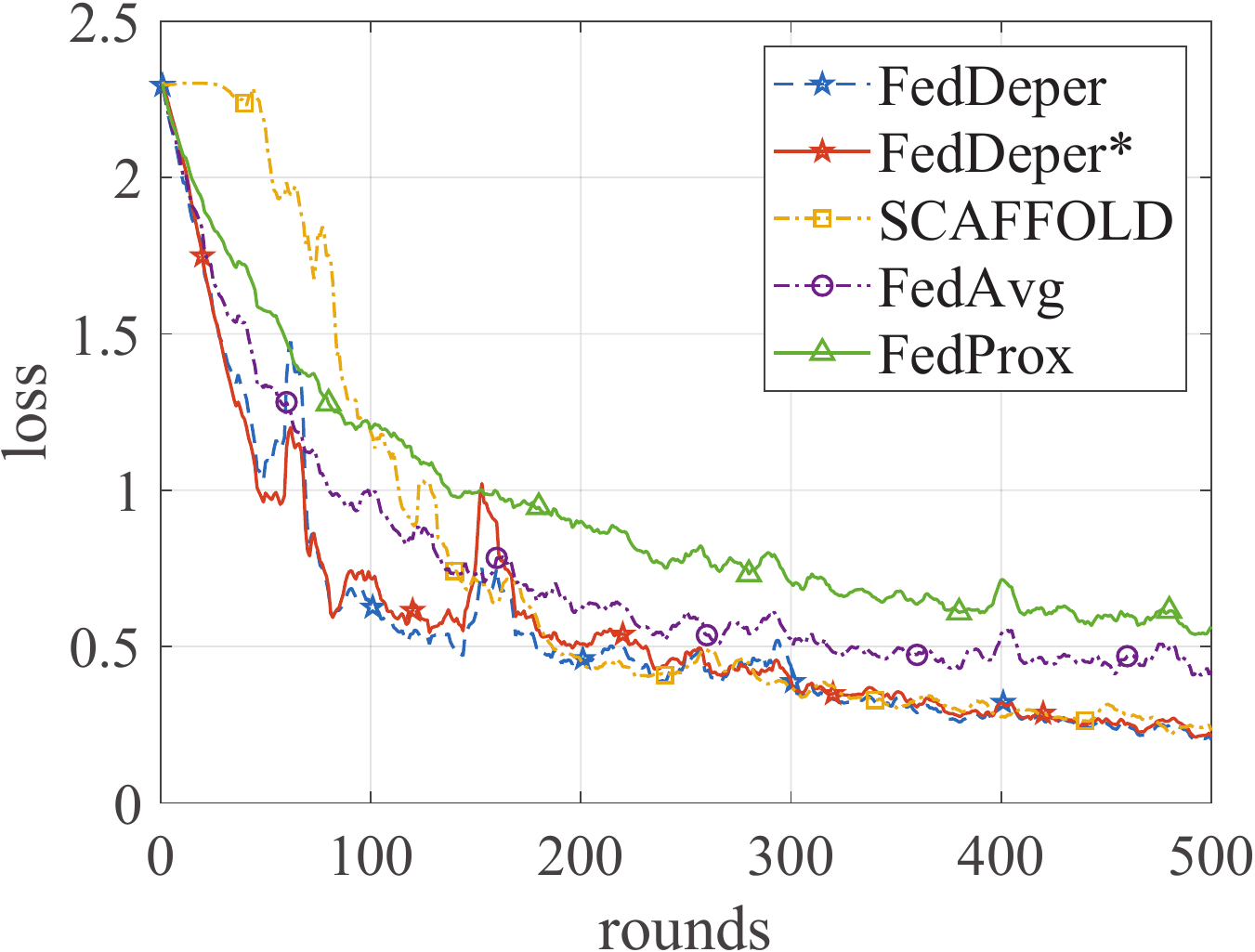}}
    \hfill
    \subfigure[CNN \& $m = 10$]{
    \includegraphics[width=0.234\textwidth]{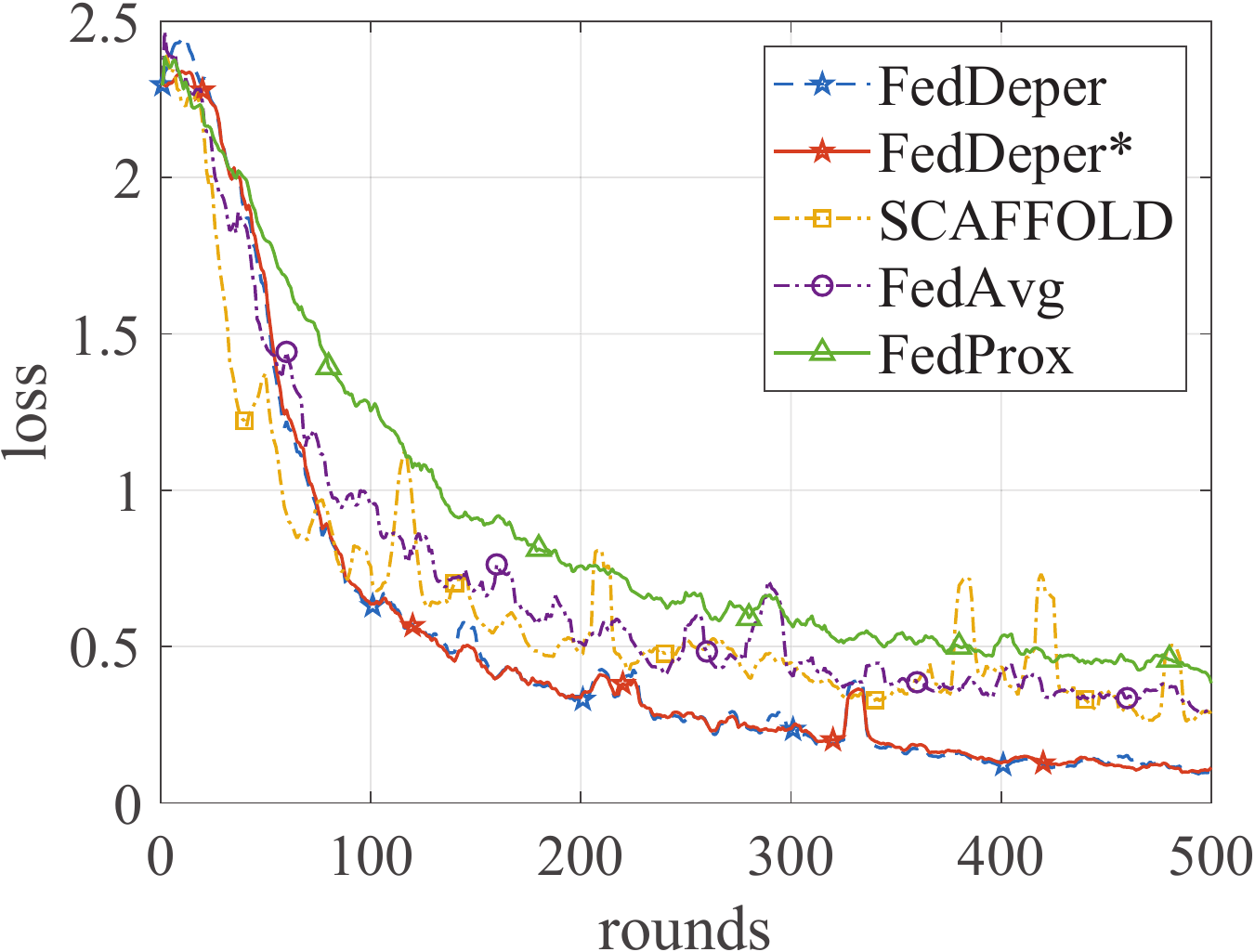}}
    \caption{Convergence Rate Comparison using MNIST in Massive-Device Scenario with $n=100$: (i) for FedDeper, FedAvg, FedProx and SCAFFOLD, the value of local step $\tau=10$, while for FedDeper*, $\tau =5$. (ii) the total communication round $K =500$.
    (iii) the sampling rate $p =$ (a)(b) $0.05$, (c)(d) $0.1$.}\label{mn100}
\end{figure}

\subsection{Experiment Setup}
\noindent\textbf{Basic Settings:}
Each client holds completely heterogeneous raw data generated by non-iid splits as \cite{FedAvg} (sorted data).
Then due to limited bandwidth, the server can only communicate to a subset of clients per round. 
Besides, the learning rate $\eta$ is always set to $0.01$.

\noindent\textbf{Machine Learning Model and Dataset:}
Models: Multilayer Perception (MLP) and Convolutional Neural Network (CNN) with high non-convexity are used as the primary ML model.
Datasets: MNIST and CIFAR-10 as public datasets are used to train ML models with the FL framework.
Model Architectures for Different Datasets: MLP always contains 2 hidden layers with 512 and 256 neurons.
For MNIST, CNN contains 2 convolutional layers with 32 and 64 3$\times$3 filters followed by 2 fully connected layers with 1024 and 512 neurons.
For CIFAR-10, CNN contains 2 convolutional layers with 64 and 128 5$\times$5 filters and 3 fully connected layers with 1024, 512 and 256 neurons.

\noindent\textbf{Baselines:}
We compare FedDeper with the following baselines to evaluate the convergence performance.
\begin{itemize}
    \item FedAvg \cite{FedAvg} is a classical FL method, which is the prototype of FedDeper.
    \item FedProx \cite{FedProx} adds a proximal term as the regularizer to FedAvg for dealing with heterogeneity, which can be regarded as the analogue of our approach.
    \item SCAFFOLD \cite{SCAFFOLD}, the state-of-art method that provably improves the FL performance on non-iid data via cross-client variance reduction but at the expense of double communication overhead, which similarly globalizes local gradients directly with control variables instead of personalized models.
\end{itemize}
Besides, we also provide {FedDeper*} defined as a version of FedDeper with half the local update steps to align the computation costs with baselines.
\begin{table}[t]
    \centering
    \begin{threeparttable}
        \setlength{\tabcolsep}{2.0mm}
        \caption{Testing Accuracy Comparison in Moderate (a) $n = 10$ and Massive (b) $n = 100$ Client Scenarios}
        \label{table_acc}
        \begin{tabular}{c|l|cccccc}
            \toprule
            \multirow{2}{*}{$m$}&\multirow{2}{*}{Method}&
            \multicolumn{2}{c}{(a) MNIST}&\multicolumn{2}{c}{(b) MNIST}&\multicolumn{2}{c}{(b) CIFAR-10}\cr
            \cmidrule(lr){3-4}  \cmidrule(lr){5-6} \cmidrule(lr){7-8} % \cmidrule(lr){6-7}
            & & MLP& CNN& MLP& CNN& MLP& CNN\cr
            \midrule\multirow{4}{*}{5}
            &Deper  & \textbf{94.92} & \textbf{96.13} & 92.17 & \textbf{95.84} & 48.84 & \textbf{68.19}  \cr
            &Deper* & 94.67 & 95.10 & 92.08 & 95.00 & 47.23 & 67.19  \cr
            &SCAF   & 94.52 & 94.20 & \textbf{93.04} & 90.45 & \textbf{49.17} & 64.11  \cr
            &Avg    & 89.19 & 90.34 & 87.64 & 89.86 & 45.22 & 56.28  \cr
            &Prox   & 86.19 & 86.83 & 82.94 & 87.17 & 44.36 & 49.20  \cr
            
            \midrule\multirow{4}{*}{10}
            &Deper  & 95.28 & \textbf{96.88} & \textbf{93.70} & \textbf{96.55} & 51.05 & \textbf{71.27}  \cr
            &Deper* & 95.23 & 96.58 & 93.42 & 95.11 & 50.66 & 70.87 \cr
            &SCAF   & \textbf{95.32} & 95.64 & 92.86 & 91.39 & \textbf{51.72} & 66.57  \cr
            &Avg    & 89.86 & 93.73 & 87.88 & 90.06 & 49.95 & 60.84  \cr
            &Prox   & 88.77 & 91.11 & 84.02 & 87.36 & 48.53 & 54.80  \cr
            \bottomrule
        \end{tabular}
    \end{threeparttable}
\end{table}

\subsection{Numerical Results}
\begin{figure}[t]
    \centering
    \subfigure[Comparison with baselines]{
    \includegraphics[width=0.234\textwidth]{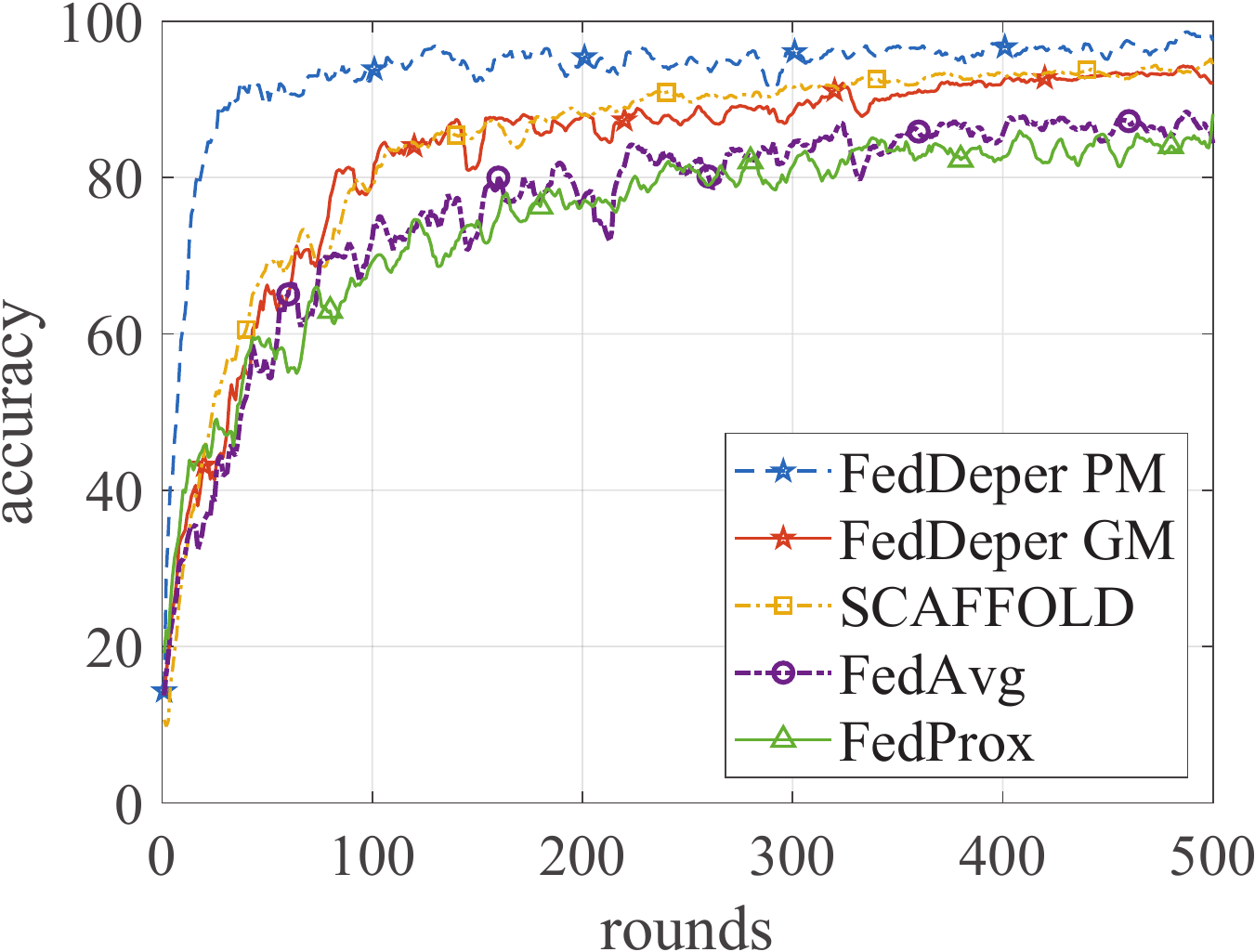}}
    \hfill %\vspace{0.2cm}
    \subfigure[FedDeper PM Generalization]{
    \includegraphics[width=0.234\textwidth]{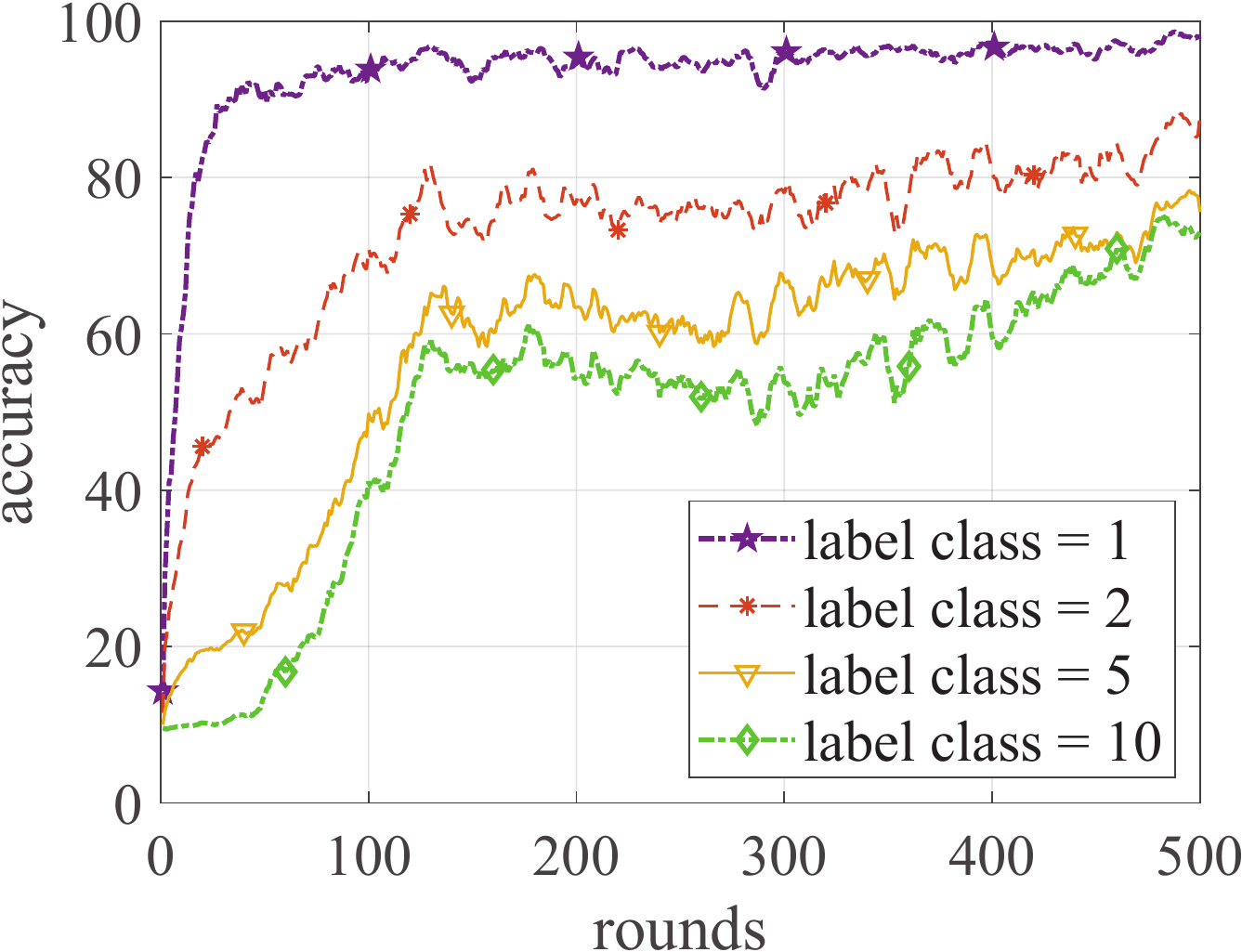}}
    \caption{The top-1 testing accuracy using settings in Fig. \ref{mn100} for personalized models: for (a), PM = personalized model \& GM = global model; for (b), extra-label classes are added incrementally to the local testing dataset.
    }\label{pers}
\end{figure}
\noindent\textbf{Effect of Hyper-parameters:}
The effect of $\rho$, $\lambda$, $\tau$ and $K$ are shown in Fig. \ref{effe}.
In (a), it is observed that $\rho$ is limited to the same order of magnitude as the learning rate $\eta$, and a suitable setting exists to reach the performance upper bound for a particular training environment, e.g., when $\rho=0.03$, the final performance is the best among the five.
In (b), the parameter $\lambda$ yields a similar result as in (a). Besides, the performance with the setting $\lambda = 0.45$ is provided to show that the range of $\lambda$ limited in Remark 1 is sufficient but unnecessary.
In (c), the result illustrates the effectiveness of additional local update steps, namely the convergence speed and final performance (w.r.t. aggregation rounds) improve as $\tau$ increases.
In (d), the reduction of aggregation rounds leads to performance degradation, and the depersonalization mechanism alleviates the objective drift resulting in a better performance of FedDeper than the original FedAvg.

\noindent\textbf{Global Performance on Common Testing Dataset:}
Fig. \ref{mn100} depicts the global training losses varying with communication rounds, which is used to compare the convergence rate of our proposed method with baselines in various settings.
In particular, the proposed FedDeper is carefully tuned to reach its theoretical convergence performance, the proximal constant in FedProx is fixed to $1$, and the FedAvg and SCAFFOLD methods are unmodified and have no extra hyper-parameters.
As shown in the figure, in the massive-device scenario with the low sampling rate, FedDeper has the lowest training loss in all cases except (a), which outperforms almost all baselines and illustrates the effectiveness of FedDeper in convergence acceleration.
Especially, FedDeper performs equally well as the state-of-the-art SCAFFOLD in cases (a)(c) and much better in (b)(d), with only half the communication overhead. Besides, FedDeper is stabler than SCAFFOLD and FedAvg because of the additional regularizer.
Additionally, Table. \ref{table_acc} summarizes the performance of all methods under fixed communication rounds in different settings (including the setting in Fig. \ref{mn100}). 
And the results shows that the proposed FedDeper has significant advantages over baselines across most cases.

\noindent\textbf{Local Performance on Individual Testing Dataset:}
The individual testing dataset on each client is generated by the non-iid splits on the whole testing dataset, which holds only one label class of data samples.
Then Fig. \ref{pers}(a) depicts the averaged testing accuracy overall individual datasets evaluating the local performance of FedDeper and baselines, where FedDeper notably has two models at each client, i.e., global model (GM) and personalized model (PM).
It is illustrated that both FedDeper GM and PM improve the FL performance compared to baselines and that PM converges much faster than GM and baselines experimentally in local testing.
This fact validates the statement in Theorem \ref{thm.per} that personalized models converge around the global model.
Moreover, we randomly add extra label classes to each individual testing dataset to investigate the PM generalization performance via averaged testing accuracy.
As shown in Fig. \ref{pers}(b), PM performs well on the original individual testing dataset with one label class.
Then, as the number of label classes increases, the performance deteriorates since PM is more learned from private data on the client than information (about other classes) through model aggregation. 
Besides, as the communication round increases, overall performance improves in all cases, which shows that model aggregation generalizes PM and enables it to classify other label types missing from the original individual training dataset.

\section{Conclusion}
In this paper, a new FL method FedDeper was proposed to improve performance on non-iid data by reversely using the personalization techniques.
Furthermore, the corresponding convergence of GM and PM was proved and discussed.
Finally, numerical results verified its advantages with the depersonalization mechanism versus existing methods:
(i) FedDeper speeds up the FL convergence.
(ii) FedDeper decouples the FL model in each participator to yield apparent enhancements in local testing.

% \section*{Acknowledgment}
% The authors would like to thank the anonymous reviewers for their thorough and constructive comments that have helped improve the quality of the paper.

% \bibliography{bibfile}

\footnotesize
\bibliographystyle{IEEEtran}
\bibliography{bibfile}

\onecolumn
\appendix
\section{Proof of Lemmas}
Throughout this paper, we use $\sum\nolimits_{i,j}$ instead of $ \sum\nolimits_{i\in \mathcal{N}}\sum\nolimits_{j \in \{0,1,...,\tau-1\}} $ and use $\mathbb{E}[\cdot]$ to denote the expectation of all random variables in square brackets.
Then to obtain the convergence bound of the algorithm, we introduce the requisite supporting definitions and lemmas as follows.
 
\begin{definition}[Aggregation Gap]
    The aggregation gap between the $k$-th and $k+1$-th rounds is defined as
    \begin{equation}\nonumber
        \begin{aligned}
            \Delta\bm{x}^k := \bm{x}^{k+1}-\bm{x}^k .
        \end{aligned}
    \end{equation}
\end{definition}
 
\begin{definition}[Local Deviation]
    In the $k$-th round, the local deviation between the trained local models and global model is defined as
    \begin{equation}\nonumber
        \zeta^k = \frac{1}{n}\sum\nolimits_{i\in\mathcal{N}}\mathbb{E}\| \bm y_{i,\tau}^k - \bm x^k \|^2.
    \end{equation}
\end{definition}
 
\begin{definition}[Average Deviation]
    In the $k$-th round, the average deviation between all local models and global model is defined as
    \begin{equation}\nonumber
        \psi^k := \frac{1}{n\tau}\sum\nolimits_{i,j}\mathbb{E}\| \bm y_{i,j}^k - \bm x^k \|^2.
    \end{equation}
\end{definition}
 
\begin{definition}[Personalized Deviation]
    In the $k$-th round, the variance between trained (and initial) personalized models and global model is defined as
    \begin{equation}\nonumber
        \begin{aligned}
            &\varphi^k := \frac{1}{n\tau}\sum\nolimits_{i,j}\mathbb{E}\|{\bm v}_{i,j}^k-\bm x^k\|^2, \tilde\varphi^k := \frac{1}{n\tau}\sum\nolimits_{i,j}\mathbb{E}\|{\bm v}_{i,0}^k-\bm x^k\|^2.
        \end{aligned}
    \end{equation}
\end{definition}

\begin{lemma}[Bounded Aggregation Gap]\label{proof.ag}
    The aggregation gap $\Delta\bm{x}^k := \bm{x}^{k+1}-\bm{x}^k$ in any round holds as
    \begin{equation}\nonumber
        \begin{aligned}
            &\mathbb{E}\| \Delta\bm{x}^{k} \|^2 \leq 4(\eta^2\beta^2+\rho^2)\tau^2 \psi^k + 4\rho^2\tau^2 \varphi^k + 4\eta^2\tau^2(\tilde{G}^2+\tilde{B}^2\mathbb{E}\|\nabla f(\bm x^k)\|^2) + \frac{\eta^2\tau\varsigma^2}{m}\\
        \end{aligned}
    \end{equation}    
where $\tilde{B}^2 := 2B^2(\frac{1}{m}-\frac{1}{n})+1$ and $\tilde{G}^2 := 2G^2(\frac{1}{m}-\frac{1}{n})$.
\end{lemma}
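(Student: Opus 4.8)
The plan is to start from the aggregation rule in Line~\ref{alg.agg}, which reads $\Delta\bm x^k=\frac{1}{m}\sum_{i\in\mathcal U^k}(\bm y_{i,\tau}^k-\bm x^k)$, and to unroll the globalized inner loop of Line~\ref{alg.local1}. Writing the surrogate stochastic gradient as $g_i^\rho(\bm y_{i,j}^k)=g_i(\bm y_{i,j}^k)+\frac{\rho}{\eta}\big((\bm v_{i,j}^k-\bm x^k)+(\bm y_{i,j}^k-\bm x^k)\big)$ and telescoping over $j$ (using $\bm y_{i,0}^k=\bm x^k$), I would obtain
\[
\bm y_{i,\tau}^k-\bm x^k=-\eta\sum_{j}\!\big(g_i(\bm y_{i,j}^k)-\nabla f_i(\bm y_{i,j}^k)\big)-\eta\sum_{j}\!\big(\nabla f_i(\bm y_{i,j}^k)-\nabla f_i(\bm x^k)\big)-\eta\tau\nabla f_i(\bm x^k)-\rho\sum_{j}(\bm y_{i,j}^k-\bm x^k)-\rho\sum_{j}(\bm v_{i,j}^k-\bm x^k),
\]
i.e.\ a mean-zero noise part plus four \emph{structured} pieces: a smoothness drift, the local gradient at the broadcast point $\bm x^k$, the globalized penalty, and the personalized penalty.

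The core estimate keeps the four structured pieces separated from the noise. To those four pieces I would apply $\|\sum_{l=1}^{4}\bm a_l\|^2\le 4\sum_{l=1}^{4}\|\bm a_l\|^2$, which produces the global prefactor $4$, and inside each piece Jensen over the $\tau$ summands extracts one factor of $\tau$. Assumption~\ref{A1} bounds the drift by $\|\nabla f_i(\bm y_{i,j}^k)-\nabla f_i(\bm x^k)\|^2\le\beta^2\|\bm y_{i,j}^k-\bm x^k\|^2$, so after averaging over $i,j$ (with $\frac1m\sum_{i\in\mathcal U^k}$ replaced by its mean $\frac1n\sum_{i\in\mathcal N}$) it becomes $\eta^2\beta^2\tau^2\psi^k$; the globalized penalty produces $\rho^2\tau^2\psi^k$, and the two combine into the $4(\eta^2\beta^2+\rho^2)\tau^2\psi^k$ term. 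The personalized penalty similarly yields $4\rho^2\tau^2\varphi^k$ once the Personalized-Deviation average $\varphi^k$ is recognized.

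The two variance sources are the delicate part. For client sampling, the local-gradient piece gives $\eta^2\tau^2\,\mathbb E\|\frac1m\sum_{i\in\mathcal U^k}\nabla f_i(\bm x^k)\|^2$; I would split the sampled average into the population gradient $\nabla f(\bm x^k)$ plus a zero-mean fluctuation, bound the fluctuation by the uniform-sampling variance, and—using the finite-population correction $\frac{n}{n-1}\le 2$—control it by $2(\frac1m-\frac1n)\frac1n\sum_{i}\|\nabla f_i(\bm x^k)\|^2$. Assumption~\ref{A3} then turns $\frac1n\sum_i\|\nabla f_i(\bm x^k)\|^2$ into $B^2\|\nabla f(\bm x^k)\|^2+G^2$, and collecting terms yields exactly $\eta^2\tau^2(\tilde G^2+\tilde B^2\mathbb E\|\nabla f(\bm x^k)\|^2)$ with $\tilde B^2=2B^2(\frac1m-\frac1n)+1$, $\tilde G^2=2G^2(\frac1m-\frac1n)$—the factor $2$ being precisely the price of the finite-population correction. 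For the SGD noise, conditional unbiasedness ($\mathbb E[g_i\mid\cdot]=\nabla f_i$, Assumption~\ref{A2}) makes cross-terms across the $\tau$ steps vanish by the martingale property, and independence of the noise across the $m$ sampled clients kills the inter-client cross-terms, so the noise contributes only $\frac{\eta^2}{m^2}\sum_{i\in\mathcal U^k}\sum_j\mathbb E\|g_i-\nabla f_i\|^2\le\frac{\eta^2\tau\varsigma^2}{m}$.

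The step I expect to be the main obstacle is arranging the accounting so that the stochastic-gradient noise enters with coefficient one rather than four: this forces one to isolate the mean-zero stochastic increment by its conditional structure \emph{separately} from the triangle-inequality split of the deterministic directions, and to argue that it does not interact with the round-start gradient $\nabla f_i(\bm x^k)$ it is paired against, while the residual dependence of the drift and penalty terms on earlier noise is (correctly) absorbed into $\psi^k$ and $\varphi^k$. Getting this conditioning right, in tandem with the finite-population sampling estimate above, is the technical crux; the remaining smoothness and penalty bounds are then routine.
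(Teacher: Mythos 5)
Your proposal matches the paper's proof in both structure and detail: the paper likewise expands $\Delta\bm x^k$ as $\frac1m\sum_{i\in\mathcal U^k}\big(\eta\sum_j g_{i,j}^k+\rho\sum_j(\bm v_{i,j}^k+\bm y_{i,j}^k-2\bm x^k)\big)$, separates the mean-zero stochastic noise (coefficient one, giving $\frac{\eta^2\tau\varsigma^2}{m}$) from the four structured directions (each picking up the factor $4$), absorbs the drift and penalties into $\psi^k$ and $\varphi^k$ via smoothness and Jensen, and bounds the sampled-gradient term with the same $2(\frac1m-\frac1n)$ finite-population correction followed by Assumption~\ref{A3}. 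No gaps; this is essentially the paper's argument.
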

\begin{proof} We directly bound the aggregation gap as the following
    \begin{equation}\nonumber
        \begin{aligned}
            \mathbb{E}\| \bm{x}^{k+1} - \bm{x}^{k} \|^2
            =& \mathbb{E}\| \frac{1}{m} \sum\nolimits_{i \in \mathcal{U}^k}(\eta\sum\nolimits_{j=0}^{\tau}{g}_{i,j}^k + \rho \sum\nolimits_{j=0}^{\tau}({\bm v}_{i,j}^k + \bm y_{i,j}^k - 2\bm x^k)) \|^2 \\
            \leq & \frac{4\eta^2\tau}{n}\sum\nolimits_{i,j}\mathbb{E}\| \nabla f_i(\bm y_{i,j}^k) - \nabla f_i(\bm x^k)\|^2 + 4\eta^2\tau^2\mathbb{E}\|\frac{1}{m}\sum\nolimits_{i \in \mathcal{U}^k}\nabla f_i(\bm x^k)\|^2 \\
            & + \frac{4\rho^2\tau}{n}\sum\nolimits_{i,j}\mathbb{E}\| \bm y_{i,j}^k - \bm x^k \|^2 + \frac{4\rho^2\tau}{n}\sum\nolimits_{i,j}\mathbb{E}\|{\bm v}_{i,j}^k - \bm x^k\|^2 + \frac{\eta^2\tau\varsigma^2}{m}\\
            \leq & 4(\eta^2\beta^2+\rho^2)\tau^2 \frac{1}{n\tau}\sum\nolimits_{i,j}\mathbb{E}\| \bm y_{i,j}^k - \bm x^k \|^2 + 4\rho^2\tau^2\frac{1}{n\tau}\sum\nolimits_{i,j}\mathbb{E}\|{\bm v}_{i,j}^k-\bm x^k\|^2\\
            & + 4\eta^2\tau^2(\mathbb{E}\|\nabla f(\bm x^k)\|^2 + 2(\frac{1}{m}-\frac{1}{n})\frac{1}{n}\sum\nolimits_{i}\|\nabla f_i(\bm x)\|^2) + \frac{\eta^2\tau\varsigma^2}{m}\\
            \leq & 4(\eta^2\beta^2+\rho^2)\tau^2 \frac{1}{n\tau}\sum\nolimits_{i,j}\mathbb{E}\| \bm y_{i,j}^k - \bm x^k \|^2 + 4\rho^2\tau^2\frac{1}{n\tau}\sum\nolimits_{i,j}\mathbb{E}\|{\bm v}_{i,j}^k-\bm x^k\|^2\\
            & + 4\eta^2\tau^2(\mathbb{E}\|\nabla f(\bm x^k)\|^2 + 2(\frac{1}{m}-\frac{1}{n})(B^2\mathbb{E}\|\nabla f(\bm x)\|^2+G^2)) + \frac{\eta^2\tau\varsigma^2}{m}\\
            =& 4(\eta^2\beta^2+\rho^2)\tau^2 \psi^k + 4\rho^2\tau^2 \varphi^k + 4\eta^2\tau^2(\tilde{G}^2+\tilde{B}^2\mathbb{E}\|\nabla f(\bm x^k)\|^2) + \frac{\eta^2\tau\varsigma^2}{m},\\
        \end{aligned}
    \end{equation}
    where the first inequality holds in \cite{FAC, SCAFFOLD} for separating mean and variance. Then we use triangle inequalities repeatedly to derive the result based on defined variables.
\end{proof}
  
\begin{lemma}[Bounded Local Deviation]\label{proof.pd}
    For any aggregation round $k$, the local deviation $\zeta^{k}$ is bounded by
    \begin{equation}\nonumber
        \begin{aligned}
            \zeta^k := \frac{1}{n}\sum\nolimits_{i\in\mathcal{N}}\mathbb{E}\| \bm y_{i,\tau}^k - \bm x^k \|^2 \leq 4(\eta^2\beta^2+\rho^2)\tau^2 \psi^k + 4\rho^2\tau^2 \varphi^k + 4\eta^2\tau^2(G^2+B^2\mathbb{E}\|\nabla f(\bm x^k)\|^2) + \eta^2\tau\varsigma^2
            % \leq & \sum\nolimits_{j=0}^{\tau}(1+\frac{1}{\tau}+5\rho^2\tau)^j C^k\\
        \end{aligned}
    \end{equation}
\end{lemma}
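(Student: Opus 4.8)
The plan is to mirror the proof of Lemma~\ref{proof.ag}, with one structural change: since the local deviation $\zeta^k$ involves no client sampling, I would bound each client's squared deviation $\mathbb{E}\|\bm y_{i,\tau}^k - \bm x^k\|^2$ separately and only afterwards average over the full index set $\mathcal{N}$, rather than first averaging the models over $\mathcal{U}^k$. This is precisely why the variance-reduction factor $\frac{1}{m}$ never enters and why the dissimilarity constants remain $G^2, B^2$ instead of their sampled counterparts $\tilde G^2, \tilde B^2$.

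First I would unroll the $\bm y$-recursion. The stochastic gradient of the surrogate (\ref{3.feddeper.localsurro}) is $g_i^{\rho}(\bm y_{i,j}^k) = g_i(\bm y_{i,j}^k) + \frac{\rho}{\eta}(\bm v_{i,j}^k + \bm y_{i,j}^k - 2\bm x^k)$, so Line~\ref{alg.local1} together with the initialization $\bm y_{i,0}^k = \bm x^k$ yields
\[
\bm y_{i,\tau}^k - \bm x^k = -\sum\nolimits_{j=0}^{\tau-1}\big[\eta\, g_i(\bm y_{i,j}^k) + \rho(\bm v_{i,j}^k + \bm y_{i,j}^k - 2\bm x^k)\big].
\]

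Next I would separate the stochastic noise from its mean by writing $g_i = \nabla f_i + (g_i - \nabla f_i)$. By Assumption~\ref{A2} the noise increments are zero-mean and, conditioned on the history, independent across the $\tau$ inner steps, so all cross terms vanish in expectation and the noise contributes exactly $\eta^2\sum\nolimits_{j}\mathbb{E}\|g_i(\bm y_{i,j}^k) - \nabla f_i(\bm y_{i,j}^k)\|^2 \leq \eta^2\tau\varsigma^2$, the last term of the claim. For the remaining deterministic sum I would split each summand into the four pieces $\eta(\nabla f_i(\bm y_{i,j}^k) - \nabla f_i(\bm x^k))$, $\eta\,\nabla f_i(\bm x^k)$, $\rho(\bm y_{i,j}^k - \bm x^k)$ and $\rho(\bm v_{i,j}^k - \bm x^k)$, then apply Jensen's inequality (giving a factor $\tau$) together with $\|a+b+c+d\|^2 \leq 4(\|a\|^2+\|b\|^2+\|c\|^2+\|d\|^2)$. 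Averaging $\frac{1}{n}\sum\nolimits_{i}$, the first piece becomes $4\eta^2\beta^2\tau^2\psi^k$ via $\beta$-smoothness (Assumption~\ref{A1}); the second becomes $4\eta^2\tau^2(B^2\mathbb{E}\|\nabla f(\bm x^k)\|^2 + G^2)$ via bounded dissimilarity (Assumption~\ref{A3}) applied to $\frac{1}{n}\sum\nolimits_i\|\nabla f_i(\bm x^k)\|^2$; and the last two become $4\rho^2\tau^2\psi^k$ and $4\rho^2\tau^2\varphi^k$ directly from the definitions of $\psi^k$ and $\varphi^k$. Merging the two $\psi^k$ terms into $4(\eta^2\beta^2+\rho^2)\tau^2\psi^k$ gives the stated bound.

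The main obstacle is bookkeeping rather than any new idea. I must keep the inner-step noise-independence argument clean so the cross terms genuinely cancel, and—unlike Lemma~\ref{proof.ag}—I must ensure no $\frac{1}{m}$ factor sneaks in: each individual deviation carries its full noise variance $\eta^2\tau\varsigma^2$, and the dissimilarity inequality is applied to the unsampled empirical average, leaving $G^2$ and $B^2$ untouched. Getting exactly the constant $4$ (and not a larger one) requires the four-way split above to line up with the four target terms, which is the only place a careless grouping could inflate the constants.
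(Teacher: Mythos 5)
Your proposal matches the paper's proof essentially step for step: unroll the $\bm y$-recursion from $\bm y_{i,0}^k=\bm x^k$, separate the stochastic noise to obtain the additive $\eta^2\tau\varsigma^2$ term, split the remaining sum into the same four pieces with a factor $4$ from Cauchy--Schwarz and a factor $\tau$ from Jensen, and close via Assumptions \ref{A1} and \ref{A3} and the definitions of $\psi^k$ and $\varphi^k$. The only difference is cosmetic (you bound per-client deviations before averaging over $\mathcal{N}$, and you index the inner sum as $j=0,\dots,\tau-1$, which is in fact the cleaner bookkeeping), so the argument is correct and not a genuinely different route.
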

\begin{proof} Applying the triangle inequality, we have
    \begin{equation}\nonumber
        \begin{aligned}
            \zeta^k =& \frac{1}{n}\sum\nolimits_{i}\mathbb{E}\| \bm y_{i,\tau}^k - \bm x^k \|^2 \\
            \leq & \frac{1}{n}\sum\nolimits_{i}\mathbb{E}\| \eta\sum\nolimits_{j=0}^{\tau}{g}_{i,j}^k + \rho \sum\nolimits_{j=0}^{\tau}(\bm{v}_{i,j}^k + \bm y_{i,j}^k - 2\bm x^k) \|^2 \\
            \leq & \frac{4\eta^2\tau}{n}\sum\nolimits_{i,j}\mathbb{E}\|\nabla f_i(\bm{y}_{i,j}^k) - \nabla f_i({\bm{x}^k})\|^2 + \frac{4\eta^2\tau}{n}\sum\nolimits_{i,j}\mathbb{E}\|\nabla f_i(\bm{x}^k)\|^2 \\ 
            & +\frac{4\rho^2\tau}{n}\sum\nolimits_{i,j}\mathbb{E}\|\bm y_{i,j}^k - \bm x^k\|^2 + \frac{4\rho^2\tau}{n}\sum\nolimits_{i,j}\mathbb{E}\|\bm{v}_{i,j}^k - \bm x^k\|^2 + \eta^2\tau\varsigma^2 \\
            \leq & 4(\eta^2\beta^2+\rho^2)\tau^2 \psi^k + 4\rho^2\tau^2 \varphi^k + 4\eta^2\tau^2(G^2+B^2\mathbb{E}\|\nabla f(\bm x^k)\|^2) + \eta^2\tau\varsigma^2\\
            % =& \mathbb{E}\| \Delta\bm{x}^{k} \|^2 + (1-\frac{1}{m})\eta^2\tau\varsigma^2
            % \leq & \sum\nolimits_{j=0}^{\tau}(1+\frac{1}{\tau}+5\rho^2\tau)^j C^k\\
        \end{aligned}
    \end{equation}
\end{proof}

\begin{lemma}[Bounded Average Deviation]\label{proof.apd}
    For any aggregation round $k$ and any $\eta,\rho$ satisfied $\eta^2\beta^2\leq\frac{(1-\rho)^2}{12\tau(\tau-1)}$, the average deviation $\psi^k := \frac{1}{n\tau}\sum\nolimits_{i,j}\mathbb{E}\| \bm y_{i,j}^k - \bm x^k \|^2$ is bounded by
    \begin{equation}\nonumber
        \begin{aligned}
            \psi^k \leq & 12\rho^2(1-\rho)^2\tau^2\tilde{\varphi}^k + 24(1-\rho)^2 \rho^2\eta^2\tau^4\beta^2\varphi^k 
            \\ & + (1 + 4\rho^2\tau^2) 6(1-\rho)^2\eta^2\tau^2(G^2+B^2\mathbb{E}\|\nabla f(\bm x^k)\|^2) + (1 + 3\rho^2\tau^2)2(1-\rho)^2\eta^2\tau\varsigma^2.\\
        \end{aligned}
    \end{equation}
\end{lemma}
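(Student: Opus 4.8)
The plan is to control $\psi^k$ with the same template used in Lemmas \ref{proof.ag} and \ref{proof.pd}: express $\bm y_{i,j}^k-\bm x^k$ through the update in Line \ref{alg.local1}, separate the stochastic noise from the deterministic part, and invoke Assumptions \ref{A1}--\ref{A3}. First I would substitute the surrogate stochastic gradient $g_i^\rho(\bm y_{i,j}^k)=g_i(\bm y_{i,j}^k)+\frac{\rho}{\eta}(\bm v_{i,j}^k+\bm y_{i,j}^k-2\bm x^k)$, obtained from $f_i^\rho$ in (\ref{3.feddeper.localsurro}), into Line \ref{alg.local1}. Subtracting $\bm x^k$ turns the update into the affine recursion $\bm y_{i,j+1}^k-\bm x^k=(1-\rho)(\bm y_{i,j}^k-\bm x^k)-\eta g_i(\bm y_{i,j}^k)-\rho(\bm v_{i,j}^k-\bm x^k)$ with zero initial value $\bm y_{i,0}^k=\bm x^k$. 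Factoring the contraction out as $\bm y_{i,j+1}^k-\bm x^k=(1-\rho)[\,\cdots]$ (equivalently, unrolling to $\bm y_{i,j}^k-\bm x^k=-\sum_{l=0}^{j-1}(1-\rho)^{j-1-l}(\eta g_i(\bm y_{i,l}^k)+\rho(\bm v_{i,l}^k-\bm x^k))$) isolates the common $(1-\rho)^2$ prefactor that multiplies every term in the statement once the geometric weights are bounded.

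Next I would take expectations of the squared norm. Using Assumption \ref{A2} I split $g_i(\bm y_{i,l}^k)=\nabla f_i(\bm y_{i,l}^k)+(\text{mean-zero noise})$, so cross terms vanish and the noise contributes a variance term $\le\varsigma^2$ per step (this produces the $\varsigma^2$ pieces). For the deterministic gradient I write $\nabla f_i(\bm y_{i,l}^k)=\nabla f_i(\bm x^k)+(\nabla f_i(\bm y_{i,l}^k)-\nabla f_i(\bm x^k))$; by Assumption \ref{A1} the remainder is bounded by $\beta\|\bm y_{i,l}^k-\bm x^k\|$, which is \emph{circular}---it reproduces the quantity $\psi^k$ we are bounding. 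The term $\nabla f_i(\bm x^k)$, after averaging over $i$ and applying Assumption \ref{A3}, yields the $G^2+B^2\mathbb{E}\|\nabla f(\bm x^k)\|^2$ contributions.

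For the personalized-deviation source $\rho(\bm v_{i,l}^k-\bm x^k)$ I would split $\bm v_{i,l}^k-\bm x^k=(\bm v_{i,0}^k-\bm x^k)+(\bm v_{i,l}^k-\bm v_{i,0}^k)$. The first piece is exactly the initial personalized deviation and gives the $\tilde\varphi^k$ term. The second is the SGD drift of $\bm v$ in Line \ref{alg.local2}, which I would bound as in Lemma \ref{proof.pd}: unroll $\bm v_{i,l}^k-\bm v_{i,0}^k=-\eta\sum_{s<l}g_i(\bm v_{i,s}^k)$ and apply smoothness, dissimilarity, and bounded variance once more. This is what generates the $\varphi^k$ term (carrying the extra $\eta^2\tau^2\beta^2$ factor from smoothness) together with the $\rho^2\tau^2$-weighted additions to the gradient and variance terms, i.e.\ the coefficients $(1+4\rho^2\tau^2)$ and $(1+3\rho^2\tau^2)$.

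Finally I would sum over $j\in\{0,\dots,\tau-1\}$ and average over $i$ to assemble $\psi^k$ on the left. The circular smoothness remainder reappears on the right as a multiple of $\psi^k$ of order $\eta^2\beta^2\tau(\tau-1)$; the hypothesis $\eta^2\beta^2\le\frac{(1-\rho)^2}{12\tau(\tau-1)}$ is precisely what forces this self-coefficient below $1$, so it can be absorbed into the left-hand side, after which $\psi^k$ is bounded by the remaining sources collected into the stated form. I expect this closure to be the main obstacle: one must track the constants through the repeated weighted estimates $\|\sum_l a_l\|^2\le(\sum_l 1)(\sum_l\|a_l\|^2)$ and the geometric-sum bounds so that the circular coefficient is dominated \emph{exactly} under the given step-size restriction, while keeping the $\tilde\varphi^k$, $\varphi^k$, gradient, and variance contributions separated with their correct $\rho$- and $\tau$-dependent weights. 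The rest of the bookkeeping is routine.
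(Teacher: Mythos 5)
Your proposal is correct and follows essentially the same route as the paper's proof: the same affine recursion $\bm y_{i,j}^k-\bm x^k=(1-\rho)(\bm y_{i,j-1}^k-\bm x^k)-\eta g_i(\bm y_{i,j-1}^k)-\rho(\bm v_{i,j-1}^k-\bm x^k)$, the same separation of noise, smoothness remainder, and $\nabla f_i(\bm x^k)$, and the same splitting of $\bm v_{i,l}^k-\bm x^k$ into the initial deviation (yielding $\tilde\varphi^k$) plus the SGD drift (yielding the $\eta^2\tau^2\beta^2\varphi^k$ and $\rho^2\tau^2$-weighted terms). The only cosmetic difference is that the paper handles the self-referential $\psi^k$ contribution by keeping it in the per-step contraction factor and bounding the resulting geometric sum by $2\tau$ under the stated step-size condition, whereas you absorb the aggregated multiple of $\psi^k$ into the left-hand side; both closures work under $\eta^2\beta^2\le\frac{(1-\rho)^2}{12\tau(\tau-1)}$.
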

\begin{proof}We first provide the one-step result and then unroll it to get the upper-bound of $ \psi^k $:
    \begin{equation}\nonumber
        \begin{aligned}
            \mathbb{E}\| \bm y_{i,j}^k - \bm x^k \|^2 \leq & \mathbb{E}\| \bm y_{i,j-1}^k - \eta \nabla f_i(\bm y_{i,j-1}^k) - \rho({\bm v}_{i,j-1}^k + \bm y_{i,j-1}^k - 2\bm x^k) - \bm x^k \|^2 + \eta^2\varsigma^2\\
            \leq & (1+\frac{1}{\tau-1})(1-\rho)^2\mathbb{E}\| \bm y_{i,j-1}^k - \bm x^k \|^2 + \tau \mathbb{E}\| \eta \nabla f_i(\bm y_{i,j-1}^k) + \rho({\bm v}_{i,j-1}^k - \bm x^k) \|^2 + \eta^2\varsigma^2\\
            \leq & ((1+\frac{1}{\tau-1})(1-\rho)^2+3\eta^2\beta^2\tau)\mathbb{E}\| \bm y_{i,j-1}^k - \bm x^k \|^2 + 3\eta^2\tau \mathbb{E}\|\nabla f_i(\bm x^k)\|^2 + 3\rho^2\tau \mathbb{E}\|{\bm v}_{i,j-1}^k - \bm x^k\|^2 + \eta^2\varsigma^2\\
            % \leq & \sum\nolimits_{j=0}^{\tau-1}(1+\frac{1}{\tau-1}+5\rho^2\tau)^j C^k\\
        \end{aligned}
    \end{equation}
    Averaging the above over index $i$, we have
    \begin{equation}\nonumber
        \begin{aligned}
            \frac{1}{n}&\sum\nolimits_i\mathbb{E}\| \bm y_{i,j}^k - \bm x^k \|^2 
            \\ \leq & ((1+\frac{1}{\tau-1})(1-\rho)^2+3\eta^2\beta^2\tau)\frac{1}{n}\sum\nolimits_i\mathbb{E}\| \bm y_{i,j-1}^k - \bm x^k \|^2 + 3\eta^2\tau\frac{1}{n}\sum\nolimits_i\mathbb{E}\|\nabla f_i(\bm x^k)\|^2 \\
            & + 3\rho^2\tau \frac{1}{n}\sum\nolimits_i\mathbb{E}\|{\bm v}_{i,j-1}^k - \bm x^k\|^2 + \eta^2\varsigma^2\\
            \leq & ((1+\frac{1}{\tau-1})(1-\rho)^2+3\eta^2\beta^2\tau)\frac{1}{n}\sum\nolimits_i\mathbb{E}\| \bm y_{i,j-1}^k - \bm x^k \|^2 + 3\eta^2\tau\frac{1}{n}\sum\nolimits_i\mathbb{E}\|\nabla f_i(\bm x^k)\|^2 \\
            & + 3\rho^2\tau \sup\nolimits_j\frac{1}{n}\sum\nolimits_i\mathbb{E}\|{\bm v}_{i,j}^k - \bm x^k\|^2 + \eta^2\varsigma^2\\
            \leq & \left(\sum\nolimits_{j}((1+\frac{1}{\tau-1})(1-\rho)^2+3\eta^2\beta^2\tau)^j\right)\left(3\eta^2\tau\frac{1}{n}\sum\nolimits_i\mathbb{E}\|\nabla f_i(\bm x^k)\|^2 + 3\rho^2\tau \sup\nolimits_j\frac{1}{n}\sum\nolimits_i\mathbb{E}\|{\bm v}_{i,j}^k - \bm x^k\|^2 + \eta^2\varsigma^2\right)\\
            % \leq & \sum\nolimits_{j=0}^{\tau-1}(1+\frac{1}{\tau-1}+5\rho^2\tau)^j C^k\\
        \end{aligned}
    \end{equation}
    To make $\sum\nolimits_{j}((1+\frac{1}{\tau-1})(1-\rho)^2+3\eta^2\beta^2\tau)^j$ have the linear growth, we should check the inequality
    \begin{equation}\nonumber
        \begin{aligned}
            \sum\nolimits_{j}(1+\frac{1+\theta}{\tau-1})^j = \frac{(1+\frac{1+\theta}{\tau-1})^\tau-1}{\frac{1+\theta}{\tau-1}} := \frac{(1+\omega)^\tau-1}{\omega} \leq \frac{e^{\omega\tau}-1}{\omega} \leq C_\omega \cdot \tau
        \end{aligned}
    \end{equation}
    Where we set $\omega(\tau) := \frac{1+\theta}{\tau-1}$ temporarily and then calculate the derivative of $ g_\omega(\tau) := \frac{e^{\omega\tau}-1}{\omega} - C_\omega \cdot \tau $ to yield the tight bound of $C_\omega$,
    \begin{equation}\nonumber
        \begin{aligned}
            g^\prime =& (1+(\frac{\tau}{\omega}-\frac{1}{\omega^2})\omega^\prime)e^{\omega\tau} + \frac{\omega^\prime}{\omega^2} - C_\omega 
            % =& (1-\frac{1+\theta}{(\tau-1)^2}(\frac{\tau}{\frac{1+\theta}{\tau-1}}-\frac{1}{(\frac{1+\theta}{\tau-1})^2}))e^{(\frac{1+\theta}{\tau-1})\tau} - \frac{\frac{1+\theta}{(\tau-1)^2}}{(\frac{1+\theta}{\tau-1})^2} - C \\
            % =& (1-\frac{1}{1+\theta}(\tau\frac{1+\theta}{\tau-1}-1))e^{(\frac{1+\theta}{\tau-1})\tau} - \frac{1}{1+\theta} - C \\
            = (\frac{1}{1+\theta}-\frac{1}{\tau-1})e^{\frac{\tau}{\tau-1}(1+\theta)} - \frac{1}{1+\theta} - C_\omega \leq 0 \\
        \end{aligned}
    \end{equation}
    Let $\theta = \frac{1}{4}$ which implies $\eta^2\beta^2\leq\frac{(1-\rho)^2}{12\tau(\tau-1)}$, we find the constant $C_\omega \leq 2$ and then plug it back to the inequality,
    \begin{equation}\nonumber
        \begin{aligned}
            \frac{1}{n}\sum\nolimits_i&\mathbb{E}\| \bm y_{i,j}^k - \bm x^k \|^2 \leq  2(1-\rho)^2\tau\left(3\eta^2\tau\frac{1}{n}\sum\nolimits_i\mathbb{E}\|\nabla f_i(\bm x^k)\|^2 + 3\rho^2\tau \sup\nolimits_j\frac{1}{n}\sum\nolimits_i\mathbb{E}\|{\bm v}_{i,j}^k - \bm x^k\|^2 + \eta^2\varsigma^2\right)\\
            \leq & 6\eta^2(1-\rho)^2\tau^2\frac{1}{n}\sum\nolimits_i\mathbb{E}\|\nabla f_i(\bm x^k)\|^2 + 6\rho^2(1-\rho)^2\tau^2 \sup\nolimits_j\frac{1}{n}\sum\nolimits_i\mathbb{E}\|{\bm v}_{i,j}^k - \bm x^k\|^2 + 2(1-\rho)^2\eta^2\tau\varsigma^2\\
            \leq & 6\eta^2(1-\rho)^2\tau^2\frac{1}{n}\sum\nolimits_i\mathbb{E}\|\nabla f_i(\bm x^k)\|^2 +  12\rho^2(1-\rho)^2\tau^2\frac{1}{n}\sum\nolimits_i\mathbb{E}\|{\bm v}_{i,0}^k-\bm x^k\|^2 
            \\ & + 24(1-\rho)^2 \rho^2\eta^2\tau^4\beta^2\varphi^k + 24\rho^2(1-\rho)^2\tau^2 \eta^2\tau^2(G^2+B^2\mathbb{E}\|\nabla f(\bm x^k)\|^2) + (1 + 3\rho^2\tau^2)2(1-\rho)^2\eta^2\tau\varsigma^2\\
        \end{aligned}
    \end{equation}
    The last inequality holds because we have the following
    \begin{equation}\nonumber
        \begin{aligned}
            \sup\nolimits_j\frac{1}{n}\sum\nolimits_i\mathbb{E}\|{\bm v}_{i,j}^k - \bm x^k\|^2
            =& \sup\nolimits_j\frac{1}{n}\sum\nolimits_i\mathbb{E}\|{\bm v}_{i,0}^k - {\eta}\sum\nolimits_{j^\prime = 0}^{j-1}g_i({\bm v}_{i,j^\prime}^k) -\bm x^k\|^2 \\
            \leq & 2\frac{1}{n}\sum\nolimits_i\mathbb{E}\|{\bm v}_{i,0}^k-\bm x^k\|^2 + 2\sup\nolimits_j\frac{1}{n}\sum\nolimits_i\mathbb{E}\|{\eta}\sum\nolimits_{j^\prime = 0}^{j-1}\nabla f_i({\bm v}_{i,j^\prime}^k)\|^2 + \eta^2\tau\varsigma^2\\
            % \leq & 2\frac{1}{n}\sum\nolimits_i\mathbb{E}\|{\bm v}_{i,0}^k-\bm x^k\|^2 + 2\sup\nolimits_j\frac{1}{n}\sum\nolimits_i \frac{\eta^2}{\tau^2} j \sum\nolimits_{j^\prime = 0}^{j-1}\mathbb{E}\|\nabla f_i({\bm v}_{i,j^\prime}^k)\|^2\\
            % \leq & 2\frac{1}{n}\sum\nolimits_i\mathbb{E}\|{\bm v}_{i,0}^k-\bm x^k\|^2 + 2\frac{1}{n}\sum\nolimits_i \frac{\eta^2}{\tau^2} \tau \sum\nolimits_{j^\prime = 0}^{\tau-1}\mathbb{E}\|\nabla f_i({\bm v}_{i,j^\prime}^k)\|^2\\
            \leq & 2\frac{1}{n}\sum\nolimits_i\mathbb{E}\|{\bm v}_{i,0}^k-\bm x^k\|^2 + 2\eta^2\tau^2\frac{1}{n\tau}\sum\nolimits_{i,j}\mathbb{E}\|\nabla f_i({\bm v}_{i,j}^k)\|^2 + \eta^2\tau\varsigma^2\\
            % \leq & 2\frac{1}{n}\sum\nolimits_i\mathbb{E}\|{\bm v}_{i,0}^k-\bm x^k\|^2 + 2\eta^2(2\frac{1}{n\tau}\sum\nolimits_{i,j}\mathbb{E}\|\nabla f_i({\bm v}_{i,j}^k)-\nabla f_i(\bm x^k)\|^2 +2\frac{1}{n\tau}\sum\nolimits_{i,j}\mathbb{E}\|\nabla f_i(\bm x^k)\|^2)\\
            \leq & 2\frac{1}{n}\sum\nolimits_i\mathbb{E}\|{\bm v}_{i,0}^k-\bm x^k\|^2 +4\eta^2\tau^2\beta^2\varphi^k +4\eta^2\tau^2(G^2+B^2\mathbb{E}\|\nabla f(\bm x^k)\|^2) + \eta^2\tau\varsigma^2\\
            % \leq & \sum\nolimits_{j=0}^{\tau-1}(1+\frac{1}{\tau-1}+5\rho^2\tau)^j C^k\\
        \end{aligned}
    \end{equation}
    Indeed 
    \begin{equation}\nonumber
        \begin{aligned}
            \frac{1}{n\tau}\sum\nolimits_{i,j}\mathbb{E}\|\nabla f_i({\bm v}_{i,j}^k)\|^2
            =& \frac{1}{n\tau}\sum\nolimits_{i,j}\mathbb{E}\|\nabla f_i({\bm v}_{i,j}^k)-\nabla f_i(\bm x^k)+\nabla f_i(\bm x^k)\|^2 \\
            \leq & 2\frac{1}{n\tau}\sum\nolimits_{i,j}\mathbb{E}\|\nabla f_i({\bm v}_{i,j}^k)-\nabla f_i(\bm x^k)\|^2 +2\frac{1}{n\tau}\sum\nolimits_{i,j}\mathbb{E}\|\nabla f_i(\bm x^k)\|^2 \\
        \end{aligned}
    \end{equation}
    Finally, we summarize as follows
    \begin{equation}\nonumber
        \begin{aligned}
            \frac{1}{n\tau}\sum\nolimits_{i,j}&\mathbb{E}\| \bm y_{i,j}^k - \bm x^k \|^2 \leq 6\eta^2(1-\rho)^2\tau^2\frac{1}{n}\sum\nolimits_i\mathbb{E}\|\nabla f_i(\bm x^k)\|^2 +  12\rho^2(1-\rho)^2\tau^2\frac{1}{n}\sum\nolimits_i\mathbb{E}\|{\bm v}_{i,0}^k-\bm x^k\|^2 
            \\ & + 24(1-\rho)^2 \rho^2\eta^2\tau^4\beta^2\varphi^k + 24\rho^2(1-\rho)^2\eta^2\tau^4(G^2+B^2\mathbb{E}\|\nabla f(\bm x^k)\|^2) + (1 + 3\rho^2\tau^2)2(1-\rho)^2\eta^2\tau\varsigma^2,\\
        \end{aligned}
    \end{equation}
    which implies the result in the lemma.
\end{proof}

\begin{lemma}[Bounded Personalization Deviation]\label{proof.cpd}
    For any aggregation round $k$, the personalized deviation $\varphi^{k}$ is bounded by
    \begin{equation}\nonumber
        \begin{aligned}
            \varphi^k \leq 2\tilde{\varphi}^k + 4\eta^2\tau^2\beta^2\varphi^k + 2\eta^2\tau^2(G^2+B^2\mathbb{E}\|\nabla f(\bm x^k)\|^2) + \eta^2\tau\varsigma^2
        \end{aligned}
    \end{equation}
    where for any constant $c > 0$ we also have the bounded initial one 
    \begin{equation}\nonumber
        \begin{aligned}
            \tilde\varphi^k :=& \frac{1}{n\tau}\sum\nolimits_{i,j}\mathbb{E}\|{\bm v}_{i,0}^k-\bm x^k\|^2\\
            \leq & ((1-{{p}})(1+c)+2{{p}}(1-\lambda)^2)\tilde{\varphi}^{k-1} + (1+7{{p}})\mathbb{E}\|\Delta \bm x^{k-1}\|^2+(1-{{p}})\frac{1}{c}\mathbb{E}\|\mathbb{E}[\Delta \bm x^{k-1}]\|^2\\
            & +8{{p}}(1-\lambda)^2\eta^2\tau^2\beta^2\varphi^{k-1}+ 8{{p}}(1-\lambda)^2\eta^2\tau^2(G^2+B^2\mathbb{E}\|\nabla f(\bm x^{k-1})\|^2)+8{{p}}\lambda^2\zeta^{k-1} + {{p}}(1-\lambda)^2\eta^2\tau\varsigma^2\\
        \end{aligned}
    \end{equation}
\end{lemma}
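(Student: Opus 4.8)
The plan is to establish the two displayed inequalities separately: the first bounds the round-$k$ personalized deviation $\varphi^k$ by its initial value $\tilde\varphi^k$ within a single round, while the second sets up the cross-round recursion on $\tilde\varphi^k$ that drives the whole argument. For the first inequality I would expand each personalized iterate along its own SGD trajectory, writing $\bm v_{i,j}^k - \bm x^k = (\bm v_{i,0}^k - \bm x^k) - \eta\sum_{j'=0}^{j-1}g_i(\bm v_{i,j'}^k)$. Separating the stochastic gradient from its mean via Assumption \ref{A2} produces the additive noise $\eta^2\tau\varsigma^2$; applying $\|a+b\|^2\le 2\|a\|^2+2\|b\|^2$ together with Jensen on the gradient sum isolates $2\tilde\varphi^k$ and a term proportional to $\eta^2\tau^2\,\frac{1}{n\tau}\sum_{i,j}\mathbb{E}\|\nabla f_i(\bm v_{i,j}^k)\|^2$. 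Converting the latter through the $\beta$-smoothness of Assumption \ref{A1} (which yields the $\eta^2\tau^2\beta^2\varphi^k$ term) and the bounded-dissimilarity Assumption \ref{A3} (which yields $G^2 + B^2\mathbb{E}\|\nabla f(\bm x^k)\|^2$) gives the claimed bound. This step introduces no new idea beyond the $\sup_j$ estimate already carried out inside the proof of Lemma \ref{proof.apd}, so I can essentially reuse it.

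The second inequality carries the real work. I would condition on the $\sigma$-field $\mathcal F^{k-1}$ generated up to the beginning of round $k-1$, so that $\bm v_{i,0}^{k-1}$ and $\bm x^{k-1}$ become deterministic, and then take expectation over the round-$(k-1)$ randomness, namely the uniform sample $\mathcal U^{k-1}$ and the minibatch noise. Because $\bm v_{i,0}^k$ obeys the mixing rule (\ref{3.feddeper.mix}) for $i\in\mathcal U^{k-1}$ and is otherwise frozen at $\bm v_{i,0}^{k-1}$, I split $\mathbb{E}\|\bm v_{i,0}^k-\bm x^k\|^2$ over the selected event (probability $p=m/n$) and the complementary event (probability $1-p$). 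On the unselected branch the deviation $\bm v_{i,0}^{k-1}-\bm x^{k-1}$ is $\mathcal F^{k-1}$-measurable while $\bm x^k=\bm x^{k-1}+\Delta\bm x^{k-1}$ moves, so I expand $\|\bm v_{i,0}^{k-1}-\bm x^{k-1}-\Delta\bm x^{k-1}\|^2$ and \emph{keep} the inner-product cross term. Since the frozen deviation is uncorrelated with the round randomness, taking the conditional expectation turns the linear term into an inner product against the mean gap $\mathbb{E}[\Delta\bm x^{k-1}]$, and a Young's inequality $2\langle a,b\rangle\le c\|a\|^2+\tfrac{1}{c}\|b\|^2$ then produces the $(1-p)c\,\tilde\varphi^{k-1}$ contribution — which merges with the diagonal $(1-p)\tilde\varphi^{k-1}$ into $(1-p)(1+c)\tilde\varphi^{k-1}$ — alongside the low-order term $(1-p)\tfrac{1}{c}\|\mathbb{E}[\Delta\bm x^{k-1}]\|^2$.

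On the selected branch I decompose $\bm v_{i,0}^k-\bm x^{k-1}=(1-\lambda)(\bm v_{i,\tau}^{k-1}-\bm x^{k-1})+\lambda(\bm y_{i,\tau}^{k-1}-\bm x^{k-1})$, peel off $\Delta\bm x^{k-1}$, and bound the pieces via $\|a+b\|^2\le 2\|a\|^2+2\|b\|^2$: the $\bm v$-piece, carrying the factor $p(1-\lambda)^2$, is expanded along its SGD trajectory exactly as in the first inequality to yield the $\tilde\varphi^{k-1}$, $\eta^2\tau^2\beta^2\varphi^{k-1}$, dissimilarity and $\varsigma^2$ terms, while the $\bm y$-piece with factor $p\lambda^2$ is precisely $8p\lambda^2\zeta^{k-1}$ after averaging and invoking the definition of $\zeta$. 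All the self-terms $\|\Delta\bm x^{k-1}\|^2$ arising from the unselected diagonal and the selected splits accumulate into the coefficient $1+7p$. Re-expressing everything through $\tilde\varphi^{k-1}$, $\varphi^{k-1}$ and $\zeta^{k-1}$ then gives the stated recursion.

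The point that needs the most care — and the main obstacle — is the coupling between the selection indicator $\mathbf 1_{\{i\in\mathcal U^{k-1}\}}$ and the aggregation gap $\Delta\bm x^{k-1}$: both are functions of the same sample $\mathcal U^{k-1}$, so one cannot simply factor the probability $p$ through the quadratic. Arranging the unselected-branch cross term so that only the \emph{mean} gap $\mathbb{E}[\Delta\bm x^{k-1}]$, rather than its full second moment, appears in the linear part is exactly what keeps that contribution low-order in the eventual telescoping of the recursion (and ultimately negligible once the step-size schedule of Theorem \ref{thm.deper} is imposed). Justifying this conditional-expectation/decorrelation step cleanly is where I expect to spend the most effort; the remaining estimates are the same triangle-inequality bookkeeping already used in Lemmas \ref{proof.ag}--\ref{proof.apd}.
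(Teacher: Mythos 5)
Your proposal matches the paper's own proof essentially step for step: the first inequality is obtained by expanding $\bm v_{i,j}^k-\bm x^k$ along the SGD trajectory and splitting into $\tilde\varphi^k$ plus a gradient-sum term handled by smoothness and Assumption \ref{A3}, and the recursion for $\tilde\varphi^k$ is obtained by splitting over the selection event with weights $1-p$ and $p$, retaining the cross term on the unselected branch so that only $\|\mathbb{E}[\Delta\bm x^{k-1}]\|^2$ appears after Young's inequality with parameter $c$, and decomposing the selected branch into the $(1-\lambda)$ trajectory piece and the $\lambda$ piece that becomes $8p\lambda^2\zeta^{k-1}$, with the $\|\Delta\bm x^{k-1}\|^2$ contributions accumulating to $1+7p$. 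The decorrelation step you flag as the main obstacle is exactly the one the paper executes (its bound on the term $T_{1,1}$), so no new idea is needed beyond what you describe.
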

\begin{proof} We directly process the variable with the triangle inequality
    \begin{equation}\label{proof.cpd.f1}
        \begin{aligned}
            \varphi^k =& \frac{1}{n\tau}\sum\nolimits_{i,j}\mathbb{E}\|{\bm v}_{i,j}^k-\bm x^k\|^2 \\
            =& \frac{1}{n\tau}\sum\nolimits_{i,j}\mathbb{E}\|{\bm v}_{i,0}^k - {\eta}\sum\nolimits_{j^\prime = 0}^{j-1}\nabla f_i({\bm v}_{i,j^\prime}^k) -\bm x^k\|^2 + \frac{(1+\tau)}{2}\eta^2\varsigma^2 \\
            \leq & 2\underbrace{\frac{1}{n\tau}\sum\nolimits_{i,j}\mathbb{E}\|{\bm v}_{i,0}^k-\bm x^k\|^2}_{T_1:=\tilde\varphi^k} + 2\underbrace{\frac{1}{n\tau}\sum\nolimits_{i,j}\mathbb{E}\|{\eta}\sum\nolimits_{j^\prime = 0}^{j-1}\nabla f_i({\bm v}_{i,j^\prime}^k)\|^2}_{T_2} + \eta^2\tau\varsigma^2 \\
        \end{aligned}
    \end{equation}
    Then we bound term $T_1$ and $T_2$ respectively. Firstly, we assert $T_1:=\tilde\varphi^k $ can be bounded by
    \begin{equation}\label{proof.cpd.T1}
        \begin{aligned}
            \tilde\varphi^k\leq & ((1-{{p}})(1+c)+2{{p}}(1-\lambda)^2)\tilde{\varphi}^{k-1} + (1+7{{p}})\mathbb{E}\|\Delta \bm x^{k-1}\|^2+(1-{{p}})\frac{1}{c}\mathbb{E}\|\mathbb{E}[\Delta \bm x^{k-1}]\|^2\\
            & +8{{p}}(1-\lambda)^2\eta^2\tau^2\beta^2\varphi^{k-1}+ 8{{p}}(1-\lambda)^2\eta^2\tau^2(G^2+B^2\mathbb{E}\|\nabla f(\bm x^{k-1})\|^2)+8{{p}}\lambda^2\zeta^{k-1} + {{p}}(1-\lambda)^2\eta^2\tau\varsigma^2\\
        \end{aligned}
    \end{equation}
    Sequentially, we calculate $T_2$ as follows
    \begin{equation}\label{proof.cpd.T2}
        \begin{aligned}
            T_2 =& \frac{1}{n\tau}\sum\nolimits_{i,j}\mathbb{E}\|{\eta}\sum\nolimits_{j^\prime = 0}^{j-1}\nabla f_i({\bm v}_{i,j^\prime}^k)\|^2 \\
            \leq & 2\frac{1}{n\tau}\sum\nolimits_{i,j}\mathbb{E}\|{\eta}\sum\nolimits_{j^\prime = 0}^{j-1}(\nabla f_i({\bm v}_{i,j^\prime}^k)-\nabla f_i(\bm x^k))\|^2 + 2{\eta^2}\frac{1}{n\tau}\sum\nolimits_{i,j}j^2\mathbb{E}\|\nabla f_i(\bm x^k)\|^2 \\
            \leq & 2\eta^2\tau^2\frac{1}{n\tau}\sum\nolimits_{i,j}\mathbb{E}\|\nabla f_i({\bm v}_{i,j}^k)-\nabla f_i(\bm x^k)\|^2 + 2\eta^2\tau^2\frac{\tau(\tau-1)(2\tau-1)}{6\tau^3}\frac{1}{n}\sum\nolimits_{i}\mathbb{E}\|\nabla f_i(\bm x^k)\|^2 \\
            \leq & 2\eta^2\tau^2\beta^2\varphi^k + \eta^2\tau^2(G^2+B^2\mathbb{E}\|\nabla f(\bm x^k)\|^2) \\
        \end{aligned}
    \end{equation}
    Having established (\ref{proof.cpd.T1}) and (\ref{proof.cpd.T2}), we can now plug them back to (\ref{proof.cpd.f1}) and derive the bound asserted in Lemma \ref{proof.cpd}:
    \begin{equation}\nonumber
        \begin{aligned}
            \varphi^k \leq 2\tilde{\varphi}^k + 4\eta^2\tau^2\beta^2\varphi^k + 2\eta^2\tau^2(G^2+B^2\mathbb{E}\|\nabla f(\bm x^k)\|^2) + \eta^2\tau\varsigma^2
        \end{aligned}
    \end{equation}
    Now we prove the assertion (\ref{proof.cpd.T1}).
    \begin{equation}\nonumber
        \begin{aligned}
            \tilde\varphi^k=& T_1 = \frac{1}{n\tau}\sum\nolimits_{i,j}\mathbb{E}\|{\bm v}_{i,0}^k-\bm x^k\|^2 \\
            =& (1-{{p}})\frac{1}{n}\sum\nolimits_{i}\mathbb{E}\|{\bm v}_{i,0}^{k-1}-\bm x^k\|^2 + {{p}}\frac{1}{n}\sum\nolimits_{i}\mathbb{E}\|(1-\lambda){\bm v}_{i,\tau}^{k-1}+\lambda\bm y_{i,\tau}^{k-1}-\bm x^k\|^2 \\
            \leq& (1-{{p}})\frac{1}{n}\sum\nolimits_{i}\underbrace{\mathbb{E}(\|{\bm v}_{i,0}^{k-1}-\bm x^{k-1}\|^2+2\Delta \bm x^{k-1}\cdot({\bm v}_{i,0}^{k-1}-\bm x^{k-1})+\|\Delta \bm x^{k-1}\|^2)}_{T_{1,1}} \\
            & + {{p}}\frac{1}{n}\sum\nolimits_{i}\mathbb{E}\underbrace{\|(1-\lambda){\bm v}_{i,\tau}^{k-1}+\lambda\bm y_{i,\tau}^{k-1}-\bm x^k\|^2}_{T_{1,2}} \\
        \end{aligned}
    \end{equation}
    Bounding the two temporary terms with any $c > 0$
    \begin{equation}\nonumber
        \begin{aligned}
            T_{1,1} =& \mathbb{E}\|{\bm v}_{i,0}^{k-1}-\bm x^{k-1}\|^2+2\mathbb{E}[\Delta \bm x^{k-1}\cdot({\bm v}_{i,0}^{k-1}-\bm x^{k-1})]+\mathbb{E}\|\Delta \bm x^{k-1}\|^2 \\
            \leq & \mathbb{E}\|{\bm v}_{i,0}^{k-1}-\bm x^{k-1}\|^2+\mathbb{E}[\frac{1}{c}\|\mathbb{E}[\Delta \bm x^{k-1}]\|^2+c\|{\bm v}_{i,0}^{k-1}-\bm x^{k-1}\|^2]+\mathbb{E}\|\Delta \bm x^{k-1}\|^2 \\
            \leq & (1+c)\mathbb{E}\|{\bm v}_{i,0}^{k-1}-\bm x^{k-1}\|^2+\frac{1}{c}\mathbb{E}\|\mathbb{E}[\Delta \bm x^{k-1}]\|^2+\mathbb{E}\|\Delta \bm x^{k-1}\|^2
        \end{aligned}
    \end{equation}
    where we apply AM-GM and Cauchy-Schwarz inequalities in the first inequality.
    \begin{equation}\nonumber
        \begin{aligned}
            T_{1,2} \leq & 2(1-\lambda)^2\|{\bm v}_{i,0}^{k-1}-\bm x^{k-1}\|^2+8(1-\lambda)^2{\eta^2}{\tau}\sum\nolimits_j\|\nabla f_i({\bm v}_{i,j}^{k-1})-\nabla f_i(\bm x^{k-1})\|^2+8(1-\lambda)^2\eta^2\tau^2\|\nabla f_i(\bm x^{k-1})\|^2 \\
            & +8\lambda^2\|\bm y_{i,\tau}^{k-1}-\bm x^{k-1}\|^2 + 8\|\Delta \bm x^{k-1}\|^2 + (1-\lambda)^2\eta^2\tau\varsigma^2
        \end{aligned}
    \end{equation}
    Finally, we obtain the assertion.
    \begin{equation}\nonumber
        \begin{aligned}
            \tilde\varphi^k
            \leq & (1-{{p}})\frac{1}{n}\sum\nolimits_{i}\left((1+c)\mathbb{E}\|{\bm v}_{i,0}^{k-1}-\bm x^{k-1}\|^2+\frac{1}{c}\mathbb{E}\|\mathbb{E}[\Delta \bm x^{k-1}]\|^2+\mathbb{E}\|\Delta \bm x^{k-1}\|^2\right) \\
            & + {{p}}\frac{1}{n}\sum\nolimits_{i}\mathbb{E}\bigg[2(1-\lambda)^2\|{\bm v}_{i,0}^{k-1}-\bm x^{k-1}\|^2+8(1-\lambda)^2{\eta^2}{\tau}\sum\nolimits_j\|\nabla f_i({\bm v}_{i,j}^{k-1})-\nabla f_i(\bm x^{k-1})\|^2 \\
            & +8(1-\lambda)^2\eta^2\tau^2\|\nabla f_i(\bm x^{k-1})\|^2+8\lambda^2\|\bm y_{i,\tau}^{k-1}-\bm x^{k-1}\|^2 + 8\|\Delta \bm x^{k-1}\|^2 + (1-\lambda)^2\eta^2\tau\varsigma^2\bigg] \\
            \leq & ((1-{{p}})(1+c)+2{{p}}(1-\lambda)^2)\tilde{\varphi}^{k-1} + (1+7{{p}})\mathbb{E}\|\Delta \bm x^{k-1}\|^2+(1-{{p}})\frac{1}{c}\mathbb{E}\|\mathbb{E}[\Delta \bm x^{k-1}]\|^2\\
            & +8{{p}}(1-\lambda)^2\eta^2\tau^2\beta^2\varphi^{k-1}+ 8{{p}}(1-\lambda)^2\eta^2\tau^2(G^2+B^2\mathbb{E}\|\nabla f(\bm x^{k-1})\|^2)+8{{p}}\lambda^2\zeta^{k-1} + {{p}}(1-\lambda)^2\eta^2\tau\varsigma^2
        \end{aligned}
    \end{equation}
\end{proof}
 
\begin{lemma}[Estimating Sequence with Bounded Deviation]\label{proof.auxiliary}
    For any constant $\varepsilon \in \mathbb R^+$, if there holds $\lambda \geq \frac{1}{2}$, $\rho \leq \eta\beta$, and $\eta\tau\beta \leq \min \{\frac{1}{10}, \frac{1}{2\sqrt{3}}(\frac{p}{2q})^\frac{1}{4}\}$, we have the following
    \begin{equation}\nonumber
        \begin{aligned}
            \frac{1}{\varepsilon}\eta\tau\beta\varphi^k&+\frac{20}{\varepsilon}(\frac{1}{p}-\frac{1}{8})\eta\tau\beta\tilde\varphi^k  \leq\frac{1}{\varepsilon}\eta\tau\beta\varphi^{k-1}+\frac{20}{\varepsilon}(\frac{1}{p}-\frac{1}{8})\eta\tau\beta\tilde\varphi^{k-1} + (-\frac{5}{12}(1-\frac{1}{8}p)) \frac{1}{\varepsilon} \eta\tau\beta\tilde{\varphi}^{k-1} 
            \\ & + ( (24c_p q \eta^4\tau^4\beta^4 + \frac{12 c_p(1-{{p}})^2}{{{p}}} + 4c_p + 62c_p {{p}})\eta^2\tau^2\beta^2 - 1)\frac{1}{\varepsilon}\eta\tau\beta\varphi^{k-1}
            \\ & + (c_p Q B^2+ \frac{12c_p (1-{{p}})^2}{{{p}}} +4c_p (1+7{{p}})\tilde{B}^2) \frac{1}{\varepsilon}\eta^3\tau^3\beta  \mathbb{E}\|\nabla f(\bm x^{k-1})\|^2 + \frac{25}{12}\frac{1}{\varepsilon} \eta^3\tau^3\beta B^2\mathbb{E}\|\nabla f(\bm x^k)\|^2
            \\ & + (\frac{33c_p {{p}}}{4}+ \frac{(1+7{{p}})c_p}{m}+ \frac{25}{24})\frac{1}{\varepsilon}\eta^3\tau^2\beta\varsigma^2 + 2 c_p q\frac{1}{\varepsilon} \eta^5\tau^4\beta^3\varsigma^2 + 6 c_p q\frac{1}{\varepsilon}\eta^7\tau^6\beta^5\varsigma^2
            \\ & + (c_p Q + \frac{25}{12})\frac{1}{\varepsilon}\eta^3\tau^3\beta G^2 +  4c_p (1+7{{p}}) \frac{1}{\varepsilon}\eta^3\tau^3\beta \tilde{G}^2   
        \end{aligned}
    \end{equation}
    where $c_p := {20}(\frac{1}{p}-\frac{1}{48})$, $q := 5+75{{p}} + \frac{15(1-{{p}})^2}{{{p}}}$, and $Q:= 6q \eta^2\beta^2\tau^2 (1 + 4\eta^2\beta^2\tau^2)  + 34{{p}} \leq \frac{1}{3} + 39p + \frac{(1-{{p}})^2}{{{p}}}$.
\end{lemma}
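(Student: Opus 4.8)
The plan is to treat $\Phi^k := \frac{1}{\varepsilon}\eta\tau\beta\varphi^k + \frac{20}{\varepsilon}(\frac{1}{p}-\frac{1}{8})\eta\tau\beta\tilde\varphi^k$ as a potential and derive a one-step recursion of the shape $\Phi^k \le \Phi^{k-1} + \text{drift} + \text{errors}$ by substituting the four preceding deviation bounds in a fixed order. First I would collapse $\varphi^k$ onto $\tilde\varphi^k$: the self-referential bound of Lemma~\ref{proof.cpd}, $\varphi^k \le 2\tilde\varphi^k + 4\eta^2\tau^2\beta^2\varphi^k + 2\eta^2\tau^2(G^2+B^2\mathbb{E}\|\nabla f(\bm x^k)\|^2)+\eta^2\tau\varsigma^2$, solved for $\varphi^k$; the precondition $\eta\tau\beta\le\frac{1}{10}$ makes $(1-4\eta^2\tau^2\beta^2)^{-1}\le\frac{25}{24}$, giving $\varphi^k\le\frac{25}{12}\tilde\varphi^k + \frac{25}{12}\eta^2\tau^2(G^2+B^2\mathbb{E}\|\nabla f(\bm x^k)\|^2)+\frac{25}{24}\eta^2\tau\varsigma^2$. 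Since $\frac{25}{12}\cdot\frac{1}{\varepsilon}\eta\tau\beta + \frac{20}{\varepsilon}(\frac{1}{p}-\frac{1}{8})\eta\tau\beta = c_p\cdot\frac{1}{\varepsilon}\eta\tau\beta$, the potential is dominated by $c_p\,\frac{1}{\varepsilon}\eta\tau\beta\,\tilde\varphi^k$ together with the round-$k$ gradient and variance terms; these already account for the lone $\mathbb{E}\|\nabla f(\bm x^k)\|^2$ summand and the constants $\frac{25}{12},\frac{25}{24}$.

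Next I would apply the $\tilde\varphi^k$ recursion of Lemma~\ref{proof.cpd} to the term $c_p\frac{1}{\varepsilon}\eta\tau\beta\tilde\varphi^k$ and eliminate $\mathbb{E}\|\Delta\bm x^{k-1}\|^2$, $\mathbb{E}\|\mathbb{E}[\Delta\bm x^{k-1}]\|^2$, $\zeta^{k-1}$ and $\psi^{k-1}$ via Lemmas~\ref{proof.ag},~\ref{proof.pd} and~\ref{proof.apd}, expanding $\psi^{k-1}$ last since it feeds both the aggregation-gap and the local-deviation bounds. Throughout, the precondition $\rho\le\eta\beta$ lets me replace every $\rho^2$ by $\eta^2\beta^2$, collapsing the $(\eta^2\beta^2+\rho^2)$ and $(1-\rho)^2$ factors into clean multiples of $\eta^2\beta^2$. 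I would then collect coefficients by type: $\tilde\varphi^{k-1}$, $\varphi^{k-1}$, $\mathbb{E}\|\nabla f(\bm x^{k-1})\|^2$, $G^2$, $\tilde G^2$ and the several orders of $\varsigma^2$. Here $q=5+75p+\frac{15(1-p)^2}{p}$ packages the $\psi^{k-1}$-expansion constants, while $Q=6q\eta^2\beta^2\tau^2(1+4\eta^2\beta^2\tau^2)+34p$ packages the gradient and $G^2$ contributions flowing through $\psi^{k-1}$, $\zeta^{k-1}$ and $\Delta\bm x^{k-1}$; the bound $\eta^2\beta^2\tau^2\le\frac{1}{100}$ then yields the stated $Q\le\frac{1}{3}+39p+\frac{(1-p)^2}{p}$.

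The free constant $c$ from Lemma~\ref{proof.cpd} is the crux, and I would set $c=\frac{p}{3(1-p)}$, which does two jobs at once. First, using $\lambda\ge\frac{1}{2}$ so that $2p(1-\lambda)^2\le\frac{p}{2}$, the contraction factor is $\alpha=(1-p)(1+c)+2p(1-\lambda)^2\le 1-\frac{p}{6}$, whence $c_p\alpha\le c_b-\frac{5}{12}(1-\frac{p}{8})$ with $c_b:=20(\frac{1}{p}-\frac{1}{8})$ the coefficient of $\tilde\varphi$; this exposes the negative drift $-\frac{5}{12}(1-\frac{p}{8})\frac{1}{\varepsilon}\eta\tau\beta\tilde\varphi^{k-1}$ that will telescope upon summation. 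Second, $\frac{1-p}{c}=\frac{3(1-p)}{p}$ inflates the $(1-p)\frac{1}{c}\mathbb{E}\|\mathbb{E}[\Delta\bm x^{k-1}]\|^2$ branch by exactly the factor producing the $\frac{12c_p(1-p)^2}{p}$ constant, and the remaining $\varphi^{k-1}$ weights $4c_p$ and $62c_pp$ come from tallying the $4\rho^2\tau^2\varphi^{k-1}\le 4\eta^2\beta^2\tau^2\varphi^{k-1}$ pieces inside $\mathbb{E}\|\Delta\bm x^{k-1}\|^2$ and $\zeta^{k-1}$ together with the $8p(1-\lambda)^2\eta^2\tau^2\beta^2\varphi^{k-1}$ term, weighted by $(1+7p)$, $8p\lambda^2$ and $\tfrac14$. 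I would finally write the $\varphi^{k-1}$ coefficient in the form $(\text{small})-1+1$ so that the $-1$ absorbs the $\varphi^{k-1}$ inherited from $\Phi^{k-1}$, leaving only the high-order $\eta^2\tau^2\beta^2$-multiple asserted in the statement.

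The genuine difficulty is not any single estimate but the bookkeeping across these four nested substitutions without sacrificing two delicate features. The most critical is keeping $\mathbb{E}\|\mathbb{E}[\Delta\bm x^{k-1}]\|^2$ distinct from $\mathbb{E}\|\Delta\bm x^{k-1}\|^2$: taking the inner expectation annihilates the $\frac{\eta^2\tau\varsigma^2}{m}$ variance term of Lemma~\ref{proof.ag}, so stochastic variance survives only through the $(1+7p)$-weighted $\mathbb{E}\|\Delta\bm x^{k-1}\|^2$ branch at scale $\frac{1}{m}$, which is exactly what produces the sampling-sensitive variance constant $\frac{(1+7p)c_p}{m}$ instead of an $O(1)$ variance blow-up. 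The second trap is the top-order term: the coefficient $24c_pq\eta^4\tau^4\beta^4$ of $\varphi^{k-1}$ originates from the $24(1-\rho)^2\rho^2\eta^2\tau^4\beta^2\varphi^{k-1}$ piece of $\psi^{k-1}$ passing through the $\psi$-coefficient of $\Delta\bm x^{k-1}$, and it is tamed only by the second branch of the precondition, $\eta\tau\beta\le\frac{1}{2\sqrt{3}}(\frac{p}{2q})^{1/4}$, which forces $q\eta^4\tau^4\beta^4\le\frac{p}{288}$; checking that this branch, rather than merely $\eta\tau\beta\le\frac{1}{10}$, controls every quartic-and-higher remainder is the step that will demand the most care.
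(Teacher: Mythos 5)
Your proposal follows essentially the same route as the paper's proof: form the potential $h_1\varphi^k+h_2\tilde\varphi^k$, eliminate $\varphi^k$ via the self-referential bound of Lemma~\ref{proof.cpd} (using $\eta\tau\beta\le\frac{1}{10}$ to get the $\frac{25}{24}$ factor, which is exactly how $h_2'=\frac{c_p}{\varepsilon}\eta\tau\beta$ arises), then unroll $\tilde\varphi^k$ with $c=\frac{p}{3(1-p)}$ and substitute Lemmas~\ref{proof.ag}--\ref{proof.apd}, with $\lambda\ge\frac12$ and $\rho\le\eta\beta$ producing the $1-\frac{p}{6}$ contraction and the stated coefficients. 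The one small imprecision is that the exact drift $-\frac{5}{12}(1-\frac{p}{8})$ emerges only after also absorbing the $c_p\cdot 12q\eta^4\beta^4\tau^4$ feedback into $\tilde\varphi^{k-1}$ via the second step-size branch (not from $c_p(1-\frac{p}{6})\le c_b-\frac{5}{12}(1-\frac{p}{8})$ alone), but this does not affect the validity of the argument.
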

\begin{proof}
    Let $c := \frac{p}{3(1-p)} $ mentioned in Lemma 2, for any $h_1, h_2 \geq 0$, there holds
    \begin{equation}\nonumber
        \begin{aligned}
            {h_1}\varphi^k+{h_2}\tilde\varphi^k
            \leq &\frac{{h_1}}{1-4\eta^2\tau^2\beta^2}(2\eta^2\tau^2(G^2+B^2\mathbb{E}\|\nabla f(\bm x^k)\|^2) + \eta^2\tau\varsigma^2)\\
            & + (\frac{2{h_1}}{1-4\eta^2\tau^2\beta^2}+{h_2})(((1-{{p}})(1+c)+2{{p}}(1-\lambda)^2)\tilde{\varphi}^{k-1} + (1+7{{p}})\mathbb{E}\|\Delta \bm x^{k-1}\|^2\\
            & +(1-{{p}})\frac{1}{c}\mathbb{E}\|\mathbb{E}[\Delta \bm x^{k-1}]\|^2+8{{p}}(1-\lambda)^2\eta^2\tau^2\beta^2\varphi^{k-1}+ 8{{p}}(1-\lambda)^2\eta^2\tau^2(G^2+B^2\mathbb{E}\|\nabla f(\bm x^{k-1})\|^2)
            \\ & +8{{p}}\lambda^2\zeta^{k-1} + {{p}}(1-\lambda)^2\eta^2\tau\varsigma^2)\\ 
            \leq &{h_1}\varphi^{k-1}+{h_2}\tilde\varphi^{k-1} + h_2^\prime Q\eta^2\tau^2 G^2 + 2h_1^\prime\eta^2\tau^2 G^2 + h_2^\prime 4(1+7{{p}}) \eta^2\tau^2 \tilde{G}^2
            \\ & + (h_2^\prime (1-\frac{1}{6}p + 12q \eta^4\beta^4\tau^4) - h_2) \tilde{\varphi}^{k-1} + (h_2^\prime (24q \eta^4\tau^4\beta^4  + \frac{12(1-{{p}})^2}{{{p}}} + 4 + 62{{p}})\eta^2\tau^2\beta^2 - h_1)\varphi^{k-1}
            \\ & + h_2^\prime (Q B^2+ \frac{12(1-{{p}})^2}{{{p}}} +4(1+7{{p}})\tilde{B}^2) \eta^2\tau^2 \mathbb{E}\|\nabla f(\bm x^{k-1})\|^2 + 2h_1^\prime\eta^2\tau^2B^2\mathbb{E}\|\nabla f(\bm x^k)\|^2
            \\ & + h_1^\prime\eta^2\tau\varsigma^2 + h_2^\prime (\frac{33}{4}{{p}}\eta^2\tau\varsigma^2 + 2q \eta^2\beta^2\tau^2 (1 + 3\eta^2\beta^2\tau^2)\eta^2\tau\varsigma^2 + (1+7{{p}}) \frac{\eta^2\tau\varsigma^2}{m})\\     
        \end{aligned}
    \end{equation}
    where $h_1^\prime:= \frac{h_1}{1-4\eta^2\tau^2\beta^2}$ and $h_2^\prime:= 2h_1^\prime+h_2$, then
    let $h_1:= \frac{1}{\varepsilon}\eta\tau\beta$, $h_2:= \frac{20}{\varepsilon}(\frac{1}{p}-\frac{1}{8})\eta\tau\beta$, $h_1^\prime := \frac{25}{24\varepsilon}\eta\tau\beta$, and $h_2^\prime := \frac{20}{\varepsilon}(\frac{1}{p}-\frac{1}{48})\eta\tau\beta$, we have
    \begin{equation}\nonumber
        \begin{aligned}
            \frac{1}{\varepsilon}\eta\tau\beta\varphi^k& +\frac{20}{\varepsilon}(\frac{1}{p}-\frac{1}{8})\eta\tau\beta\tilde\varphi^k      
            \leq\frac{1}{\varepsilon}\eta\tau\beta\varphi^{k-1}+\frac{20}{\varepsilon}(\frac{1}{p}-\frac{1}{8})\eta\tau\beta\tilde\varphi^{k-1} + (-\frac{5}{12}(1-\frac{1}{8}p)) \frac{1}{\varepsilon} \eta\tau\beta\tilde{\varphi}^{k-1} 
            \\ & + ( (24c_p q \eta^4\tau^4\beta^4 + \frac{12 c_p(1-{{p}})^2}{{{p}}} + 4c_p + 62c_p {{p}})\eta^2\tau^2\beta^2 - 1)\frac{1}{\varepsilon}\eta\tau\beta\varphi^{k-1}
            \\ & +  (c_p Q B^2+ \frac{12c_p (1-{{p}})^2}{{{p}}} +4c_p (1+7{{p}})\tilde{B}^2) \frac{1}{\varepsilon}\eta^3\tau^3\beta  \mathbb{E}\|\nabla f(\bm x^{k-1})\|^2 + \frac{25}{12}\frac{1}{\varepsilon} \eta^3\tau^3\beta B^2\mathbb{E}\|\nabla f(\bm x^k)\|^2
            \\ & + (\frac{33c_p {{p}}}{4}+ \frac{(1+7{{p}})c_p}{m}+ \frac{25}{24})\frac{1}{\varepsilon}\eta^3\tau^2\beta\varsigma^2 + 2 c_p q\frac{1}{\varepsilon} \eta^5\tau^4\beta^3\varsigma^2 + 6 c_p q\frac{1}{\varepsilon}\eta^7\tau^6\beta^5\varsigma^2
            \\ & + (c_p Q + \frac{25}{12})\frac{1}{\varepsilon}\eta^3\tau^3\beta G^2 +  4c_p (1+7{{p}}) \frac{1}{\varepsilon}\eta^3\tau^3\beta \tilde{G}^2   
        \end{aligned}
    \end{equation}
\end{proof}
 
Having established preceding lemmas, we can derive the convergence result of the recursive form.
\begin{lemma}[One round progress]\label{proof.orp}
    Suppose that $\rho \leq \eta\beta$, $\eta\tau\beta \leq \min\{\frac{1}{144\tilde{B}^2}, \frac{1}{84\sqrt{2}\sqrt{l_p^1+l_p^2B^2+l_p^3\tilde{B}^2}}\}$, we have
    \begin{equation}\nonumber
        \begin{aligned}
            \mathbb{E}&[f(\bm{x}^k+\Delta \bm{x}^k)] + 2\eta\tau\beta^2\varphi^{k+1}+ {40}(\frac{1}{p}-\frac{1}{8})\eta\tau\beta^2\tilde\varphi^{k+1}
            \\ \leq & f(\bm{x}^k) + 2\eta\tau\beta^2\varphi^{k}+ {40}(\frac{1}{p}-\frac{1}{8})\eta\tau\beta^2\tilde\varphi^{k} - \frac{1}{24}\eta\tau \mathbb{E}\|\nabla f(\bm x^{k})\|^2
            \\ & + \frac{25}{6} \eta^3\tau^3\beta^2 B^2\bigg(\mathbb{E}\|\nabla f(\bm x^{k+1})\|^2 - \mathbb{E}\|\nabla f(\bm x^k)\|^2\bigg) + \eta^2\tau^2\beta \bigg(2 \tilde{G}^2 + \frac{1}{2}\frac{\varsigma^2}{\tau m}\bigg) 
            \\ & + \eta^3\tau^3\beta^2 \bigg( (1120 + \frac{160}{p}) \tilde{G}^2 +  (1548 + \frac{25}{2p} + \frac{75}{2}\frac{(1-p)^2}{p^2} + \frac{97}{6}) G^2 + ({330 {{p}}} + \frac{{40}}{m{p}} + \frac{280}{m} + \frac{73}{12}) \frac{\varsigma^2 }{\tau}\bigg) 
            \\ & +\eta^4\tau^4\beta^3 \bigg(24 G^2 +  \frac{8\varsigma^2}{\tau}\bigg) + \eta^5\tau^5\beta^4 \bigg(48 G^2 + \frac{(12p + {80q} )\varsigma^2}{p\tau}\bigg) + \eta^6\tau^6\beta^5 \bigg(96 G^2 + \frac{24\varsigma^2}{\tau}\bigg)  + \eta^7\tau^7\beta^6 \frac{240 q}{p} \frac{\varsigma^2}{\tau}
        \end{aligned}
    \end{equation}
    where $l_p^1:= \frac{15 (1-{{p}})^2}{49{{p}}^2}$, $l_p^2:= 1 + \frac{25}{3136p} + \frac{75(1-p)^2}{3136p^2}$, and $l_p^3:= \frac{5}{7}+\frac{5}{49p}$.
\end{lemma}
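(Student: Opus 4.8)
The plan is to establish a one-round descent for the Lyapunov function $\Phi^k := \mathbb{E}f(\bm x^k) + 2\eta\tau\beta^2\varphi^k + 40(\frac1p-\frac18)\eta\tau\beta^2\tilde\varphi^k - \frac{25}{6}\eta^3\tau^3\beta^2 B^2\mathbb{E}\|\nabla f(\bm x^k)\|^2$, assembling the four deviation lemmas already proved. First I would apply $\beta$-smoothness of $f=\frac1n\sum_i f_i$ along $\bm x^{k+1}=\bm x^k+\Delta\bm x^k$ to get $\mathbb{E}f(\bm x^{k+1}) \leq f(\bm x^k) + \langle\nabla f(\bm x^k),\mathbb{E}[\Delta\bm x^k]\rangle + \frac{\beta}{2}\mathbb{E}\|\Delta\bm x^k\|^2$. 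The quadratic term is handled directly by Lemma~\ref{proof.ag}; the inner product is the delicate one. Using $\mathbb{E}[\Delta\bm x^k] = -\frac1n\sum_i\mathbb{E}[\eta\sum_j\nabla f_i(\bm y^k_{i,j}) + \rho\sum_j(\bm v^k_{i,j}+\bm y^k_{i,j}-2\bm x^k)]$ and writing $\nabla f_i(\bm y^k_{i,j}) = \nabla f_i(\bm x^k) + (\nabla f_i(\bm y^k_{i,j})-\nabla f_i(\bm x^k))$, I would extract the leading descent $-\eta\tau\|\nabla f(\bm x^k)\|^2$ and Young-split the gradient-mismatch and regularizer-induced bias into $\psi^k$ and $\varphi^k$ (via $\beta$-smoothness), retaining a strictly negative fraction of $\|\nabla f(\bm x^k)\|^2$.

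Next I would eliminate $\psi^k$, which is absent from the target potential, by substituting Lemma~\ref{proof.apd}; here $\rho\leq\eta\beta$ lets me replace $\eta^2\beta^2+\rho^2$ by $2\eta^2\beta^2$ and re-express the average deviation through $\tilde\varphi^k,\varphi^k,\|\nabla f(\bm x^k)\|^2,G^2,\tilde G^2,\varsigma^2$ (its precondition $\eta^2\beta^2\le\frac{(1-\rho)^2}{12\tau(\tau-1)}$ follows from $\eta\tau\beta\le\frac1{10}$, which is weaker than the stated thresholds). At this point the bound reads $\mathbb{E}f(\bm x^{k+1}) \leq f(\bm x^k) - c_1\eta\tau\|\nabla f(\bm x^k)\|^2 + (\text{positive }\varphi^k,\tilde\varphi^k) + (\text{noise/dissimilarity})$. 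I would then add $2\eta\tau\beta^2\varphi^{k+1}+40(\frac1p-\frac18)\eta\tau\beta^2\tilde\varphi^{k+1}$ to both sides and invoke Lemma~\ref{proof.auxiliary} at index $k+1$ with $\varepsilon=\frac{1}{2\beta}$, so that $\frac1\varepsilon\eta\tau\beta = 2\eta\tau\beta^2$ matches the $\varphi$ coefficient (and $\frac{20}\varepsilon(\frac1p-\frac18)\eta\tau\beta$ matches the $\tilde\varphi$ one). This recursively trades the round-$(k+1)$ deviations for round-$k$ deviations and supplies the crucial negative coefficients $-\frac{5}{12}(1-\frac18p)\frac1\varepsilon\eta\tau\beta\tilde\varphi^{k}$ and $((\ldots)\eta^2\tau^2\beta^2-1)\frac1\varepsilon\eta\tau\beta\varphi^{k}$.

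The decisive step is the sign bookkeeping inside the stated step-size window. I would verify that $\eta\tau\beta\le\frac{1}{84\sqrt2\sqrt{l_p^1+l_p^2B^2+l_p^3\tilde B^2}}$ forces the total coefficients of $\varphi^k$ and $\tilde\varphi^k$—the positive contributions from the smoothness, Lemma~\ref{proof.ag}, and Lemma~\ref{proof.apd} steps minus the negatives from Lemma~\ref{proof.auxiliary}—to be non-positive, so those deviation terms are discarded; simultaneously $\eta\tau\beta\le\frac{1}{144\tilde B^2}$ ensures that after subtracting the positive $\eta^2\tau^2\tilde B^2$ and $\eta^3\tau^3\beta^2$ gradient terms the net $\|\nabla f(\bm x^k)\|^2$ coefficient collapses to exactly $-\frac1{24}\eta\tau$. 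The round-$(k+1)$ gradient contribution $\frac{25}{12}\frac1\varepsilon\eta^3\tau^3\beta B^2\|\nabla f(\bm x^{k+1})\|^2 = \frac{25}{6}\eta^3\tau^3\beta^2 B^2\|\nabla f(\bm x^{k+1})\|^2$ cannot be bounded at round $k$, so I would keep it and pair it with $-\frac{25}{6}\eta^3\tau^3\beta^2 B^2\|\nabla f(\bm x^k)\|^2$ to form the telescoping difference in the statement. Finally I would collect the $G^2,\tilde G^2,\varsigma^2$ remainders, grouping them by the powers $\eta^a\tau^a\beta^{a-1}$ for $a=2,\dots,7$, to recover the displayed right-hand side. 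I expect the main obstacle to be this constant-chasing—and confirming that the two thresholds simultaneously clear every lemma precondition and enforce every sign requirement—rather than any single conceptual step.
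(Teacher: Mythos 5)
Your proposal follows essentially the same route as the paper's proof: $\beta$-smoothness along $\Delta\bm x^k$, Young-splitting the inner product, substituting Lemmas~\ref{proof.ag}--\ref{proof.cpd} to eliminate $\psi^k$, then adding the estimating-sequence Lemma~\ref{proof.auxiliary} at index $k+1$ to both sides so the deviation terms telescope into the potential, with the step-size thresholds forcing the residual $\varphi^k,\tilde\varphi^k$ and gradient coefficients to the required signs (the paper picks $\varepsilon=\frac{2}{3\beta}$ and adds $\frac{4}{3}$ of Lemma~\ref{proof.auxiliary}, which yields the same potential coefficients $2\eta\tau\beta^2$ and $40(\frac1p-\frac18)\eta\tau\beta^2$ as your $\varepsilon=\frac{1}{2\beta}$ with unit multiplier). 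The approach and all key ingredients match; only the constant bookkeeping is parameterized slightly differently.
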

\begin{proof}
    We begin with the property of smoothness directly. For any $\varepsilon > 0$, there holds
    \begin{equation}\label{{proof.orp.f1}}
        \begin{aligned}
            \mathbb{E}&[f(\bm{x}^k+\Delta \bm{x}^k)]-f(\bm{x}^k) \leq \nabla f(\bm{x}^k)\cdot\mathbb{E}[\Delta \bm{x}^k]+\frac{\beta}{2} \mathbb{E}\|\Delta \bm{x}^k\|^{2}
            \\ \leq & - \nabla f(\bm{x}^k)\cdot \mathbb{E}[\frac{1}{m}\sum\nolimits_{i \in \mathcal{U}^k}(\eta\sum\nolimits_{j=0}^{\tau}{g}_{i,j}^k + \rho \sum\nolimits_{j=0}^{\tau}({\bm v}_{i,j}^k + \bm y_{i,j}^k - 2\bm x^k))]+\frac{\beta}{2} \mathbb{E}\|\Delta \bm{x}^k\|^{2}\\
            \leq & - \frac{1}{2}\eta\tau\left(\mathbb{E}\|\nabla f(\bm{x}^k)\|^2 - \beta^2\psi^k\right) + \frac{1}{2}\rho\tau\left(\frac{\varepsilon}{2}\mathbb{E}\|\nabla f(\bm{x}^k)\|^2 + \frac{2}{\varepsilon}\varphi^k\right)
            \\ & + \frac{1}{2}\rho\tau\left(\frac{\varepsilon}{2}\mathbb{E}\|\nabla f(\bm{x}^k)\|^2 + \frac{2}{\varepsilon}\psi^k\right) + \frac{\beta}{2} \mathbb{E}\|\Delta \bm{x}^k\|^{2}\\
            \leq & - (\frac{1}{2}\eta\tau-\frac{\varepsilon}{2}\rho\tau)\mathbb{E}\|\nabla f(\bm{x}^k)\|^2 + (\frac{1}{2}\eta\tau\beta^2+\frac{1}{\varepsilon}\rho\tau+2\beta(\eta^2\beta^2+\rho^2)\tau^2)\psi^k
            \\ & + (\frac{1}{\varepsilon}\rho\tau + 2\beta\rho^2\tau^2) \varphi^k + 2\beta\eta^2\tau^2(\tilde{G}^2+\tilde{B}^2\mathbb{E}\|\nabla f(\bm x^k)\|^2) + \frac{\beta}{2}\frac{\eta^2\tau\varsigma^2}{m}
            \\ {\leq}& (- \frac{1}{6}\eta\tau +2\beta\eta^2\tau^2 \tilde{B}^2 + 12(1+2\eta\beta\tau) (1 + 4\eta^2\beta^2\tau^2)\eta^3\tau^3\beta^2 B^2)\mathbb{E}\|\nabla f(\bm x^k)\|^2
            \\ & + 24(1+2\eta\beta\tau)\eta^3\tau^3\beta^4\tilde{\varphi}^k + (\frac{3}{2}\eta\tau\beta^2 + 2\beta\eta^2\beta^2\tau^2) \varphi^k + 48(1+2\eta\beta\tau)\eta^5\tau^5\beta^6\varphi^k
            \\ & + 2\beta\eta^2\tau^2 \tilde{G}^2 + 12(1+2\eta\beta\tau) (1 + 4\eta^2\beta^2\tau^2)\eta^3\tau^3\beta^2 G^2 + \frac{\beta}{2}\frac{\eta^2\tau\varsigma^2}{m} + 4(1+2\eta\beta\tau)(1 + 3\eta^2\beta^2\tau^2)\eta^3\tau^2\beta^2\varsigma^2
        \end{aligned}
    \end{equation}
    where in the last inequality, we use $\varepsilon := \frac{2}{3\beta}$ and Lemmas 2-4. Then, we add $\frac{4}{3}$ of Lemma 5 to each side of (\ref{{proof.orp.f1}}) to derive
    \begin{equation}\label{proof.orp.f2}
        \begin{aligned}
            \mathbb{E}&[f(\bm{x}^k+\Delta \bm{x}^k)] + 2\eta\tau\beta^2\varphi^{k+1}+ {40}(\frac{1}{p}-\frac{1}{8})\eta\tau\beta^2\tilde\varphi^{k+1}
            \\ \leq & f(\bm{x}^k) + 2\eta\tau\beta^2\varphi^{k}+ {40}(\frac{1}{p}-\frac{1}{8})\eta\tau\beta^2\tilde\varphi^{k} - (\frac{1}{24}\eta\tau - P_1) \mathbb{E}\|\nabla f(\bm x^{k})\|^2 + P_2\tilde{\varphi}^{k} + P_3\varphi^{k}
            \\ & + \frac{25}{6} \eta^3\tau^3\beta^2 B^2\bigg(\mathbb{E}\|\nabla f(\bm x^{k+1})\|^2 - \mathbb{E}\|\nabla f(\bm x^k)\|^2\bigg) + \eta^2\tau^2\beta \bigg(2 \tilde{G}^2 + \frac{1}{2}\frac{\varsigma^2}{\tau m}\bigg) 
            \\ & + \eta^3\tau^3\beta^2 \bigg( (1120 + \frac{160}{p}) \tilde{G}^2 +  (1548 + \frac{25}{2p} + \frac{75}{2}\frac{(1-p)^2}{p^2} + \frac{97}{6}) G^2 + ({330 {{p}}} + \frac{{40}}{m{p}} + \frac{280}{m} + \frac{73}{12}) \frac{\varsigma^2 }{\tau}\bigg) 
            \\ & +\eta^4\tau^4\beta^3 \bigg(24 G^2 +  \frac{8\varsigma^2}{\tau}\bigg) + \eta^5\tau^5\beta^4 \bigg(48 G^2 + \frac{(12p + {80q} )\varsigma^2}{p\tau}\bigg) + \eta^6\tau^6\beta^5 \bigg(96 G^2 + \frac{24\varsigma^2}{\tau}\bigg)  + \eta^7\tau^7\beta^6 \frac{240 q}{p} \frac{\varsigma^2}{\tau}.
        \end{aligned}
    \end{equation}
    We use $\eta\tau\beta \leq \min\{\frac{1}{144\tilde{B}^2}, \frac{1}{84\sqrt{2}\sqrt{l_p^1+l_p^2B^2+l_p^3\tilde{B}^2}}\}$ to guarantee $P_i \leq 0, i=1,2,3$, where polynomials w.r.t. $\eta$ have the following
    \begin{equation}\nonumber
        \begin{aligned}
            P_1 := & - \frac{1}{8}\eta\tau +2\beta\eta^2\tau^2 \tilde{B}^2 + (20 B^2 + 2 c_p Q B^2+ \frac{24c_p (1-{{p}})^2}{{{p}}} +8c_p (1+7{{p}})\tilde{B}^2)\eta^3\tau^3\beta^2
            \\ & - (1+\frac{4n}{m}){256}\eta^3\tau^3\beta^2(2(\frac{n}{m}+{{p}}-2) + (\frac{2m}{n}\frac{(1-\lambda)^2}{\tau^2}+\frac{8m}{n}\lambda^2)B^2+(1+\frac{7m}{n})\tilde{B}^2+150l\eta^2\tau^2\beta^2 B^2)
            \\ P_2 := & 24(1+2\eta\beta\tau)\eta^3\tau^3\beta^4 + (-\frac{5}{6}(1-\frac{1}{8}p)) \eta\tau\beta^2
            \\ P_3 := & (\frac{3}{2}\eta\tau\beta^2 + 2\beta\eta^2\beta^2\tau^2) + 48(1+2\eta\beta\tau)\eta^5\tau^5\beta^6 + ( (24c_p q \eta^4\tau^4\beta^4 + \frac{12 c_p(1-{{p}})^2}{{{p}}} + 4c_p + 62c_p {{p}})\eta^2\tau^2\beta^2 - 1)2\eta\tau\beta^2
        \end{aligned}
    \end{equation}
    Finally, we obtain the result by simplifying the formula (\ref{proof.orp.f2}).
\end{proof}

\section{Proof of Theorems}
Using Lemma \ref{proof.orp}, we easily obtain the convergence bound of FedDeper in the sense of Ces\`aro means.
 
\begin{thmbis}{thm.deper}
    Suppose that each loss function $ (f_i) $ meets Assumptions \ref{A1}, \ref{A2}, and \ref{A3}(i). Then the proposed FL method satisfies:
    \begin{equation}\nonumber
        \begin{aligned}
            & \frac{1}{K}\sum\nolimits_{k=0}^{K-1}\mathbb{E}\|\nabla f(\bm x^{k})\|^2 \leq \frac{24\varGamma}{\eta\tau K} + 12 \eta\tau\beta \bigg(4 \tilde{G}^2 + \frac{\varsigma^2}{\tau m}\bigg) 
            \\ & + 24 \eta^2\tau^2\beta^2 \bigg( (1120 + \frac{160}{p}) \tilde{G}^2 +  (1548 + \frac{25}{2p} + \frac{75}{2}\frac{(1-p)^2}{p^2} + \frac{97}{6}) G^2 + ({330 {{p}}} + \frac{{40}}{m{p}} + \frac{280}{m} + \frac{73}{12}) \frac{\varsigma^2 }{\tau}\bigg) 
            \\ & + 192 \eta^3\tau^3\beta^3 \bigg(3 G^2 +  \frac{\varsigma^2}{\tau}\bigg) + 96 \eta^4\tau^4\beta^4 \bigg(12 G^2 + \frac{(3p + {20q} )\varsigma^2}{p\tau}\bigg) + 576 \eta^5\tau^5\beta^5 \bigg(4 G^2 + \frac{\varsigma^2}{\tau}\bigg)  + 5760 \eta^6\tau^6\beta^6 \frac{q\varsigma^2}{p\tau}
        \end{aligned}
    \end{equation}
    where $ \varGamma := f(\bm x^0) - f(\bm x^{*}) $, $\tilde{B}^2 := 2B^2(\frac{1}{m}-\frac{1}{n})+1$ and $\tilde{G}^2 := 2G^2(\frac{1}{m}-\frac{1}{n})$.
\end{thmbis}
\begin{proof}
    For the sake of convenience, we let $ {F}^{k}:=f(\bm{x}^k) + 2\eta\tau\beta^2\varphi^{k}+ {40}(\frac{1}{p}-\frac{1}{8})\eta\tau\beta^2\tilde\varphi^{k}$. Then we rewrite Lemma \ref{proof.orp} as follows
    \begin{equation}\nonumber
        \begin{aligned}
            \frac{1}{24}& \eta\tau \mathbb{E}\|\nabla f(\bm x^{k})\|^2 \leq {F}^{k} - {F}^{k+1} + \frac{25}{6} \eta^3\tau^3\beta^2 B^2\bigg(\mathbb{E}\|\nabla f(\bm x^{k+1})\|^2 - \mathbb{E}\|\nabla f(\bm x^k)\|^2\bigg) + \eta^2\tau^2\beta \bigg(2 \tilde{G}^2 + \frac{1}{2}\frac{\varsigma^2}{\tau m}\bigg) 
            \\ & + \eta^3\tau^3\beta^2 \bigg( (1120 + \frac{160}{p}) \tilde{G}^2 +  (1548 + \frac{25}{2p} + \frac{75}{2}\frac{(1-p)^2}{p^2} + \frac{97}{6}) G^2 + ({330 {{p}}} + \frac{{40}}{m{p}} + \frac{280}{m} + \frac{73}{12}) \frac{\varsigma^2 }{\tau}\bigg) 
            \\ & +\eta^4\tau^4\beta^3 \bigg(24 G^2 +  \frac{8\varsigma^2}{\tau}\bigg) + \eta^5\tau^5\beta^4 \bigg(48 G^2 + \frac{(12p + {80q} )\varsigma^2}{p\tau}\bigg) + \eta^6\tau^6\beta^5 \bigg(96 G^2 + \frac{24\varsigma^2}{\tau}\bigg)  + \eta^7\tau^7\beta^6 \frac{240 q}{p} \frac{\varsigma^2}{\tau}
        \end{aligned}
    \end{equation}
    Directly we establish the convergence rate by averaging over $k$ on both sides of the preceding.
\end{proof}

\begin{thmbis}{thm.per}
    Let $\frac{1}{n\tau K}\sum\nolimits_{i,j,k}(\cdot)$ average over all the indexes $i,j,k$, (i) in terms of Theorem \ref{thm.deper}, for any $\lambda \in [\frac{1}{2}, 1]$, we have 
    \begin{equation}\nonumber
        \begin{aligned}
            \frac{1}{n\tau K}\sum\nolimits_{i,j,k}\|\bm v_{i,j}^k - \bm x^*\|^2 \leq \mathcal{O}({\xi^{0}}) + \mathcal{O}(\epsilon),
        \end{aligned}
    \end{equation}
    and (ii) in terms of Corollary \ref{thm.rate}, for $\lambda = 1$, we have
    \begin{equation}\nonumber
        \begin{aligned}
            \frac{1}{n\tau K}\sum\nolimits_{i,j,k}\|\bm v_{i,j}^k - \bm x^*\|^2 \leq \mathcal{O}(\epsilon),
        \end{aligned}
    \end{equation}
    where $\mathcal{O}$ hides all constants, $\xi^0 := \frac{1}{n\tau}\sum\nolimits_{i,j}\mathbb{E}\|{\bm v}_{i,j}^0-\bm x^0\|^2 $, and $\epsilon := \frac{1}{K}\sum\nolimits_{k=0}^{K-1}\mathbb{E}\|\nabla f(\bm x^{k})\|^2$.
\end{thmbis}
\begin{proof}
    We directly start by bounding the following term with the triangle inequality
    \begin{equation}\nonumber
        \begin{aligned}
            \frac{1}{n\tau K}\sum\nolimits_{i,j,k}\|\bm v_{i,j}^k - \bm x^*\|^2 &\leq \frac{1}{n\tau K}\sum\nolimits_{i,j,k}\bigg(\|\bm v_{i,j}^k - \bm x^*\|^2 + {20}(\frac{1}{p}-\frac{1}{8}) \|\bm v_{i,0}^k - \bm x^*\|^2\bigg)
            \\ & = \frac{1}{K}\sum\nolimits_k\frac{1}{n\tau}\sum\nolimits_{i,j}\|\bm v_{i,j}^k - \bm x^*\|^2 + {20}(\frac{1}{p}-\frac{1}{8})\frac{1}{K}\sum\nolimits_k\frac{1}{n}\sum\nolimits_{i} \|\bm v_{i,0}^k - \bm x^*\|^2
            \\ & = 2\frac{1}{K}\sum\nolimits_k\underbrace{(\varphi^k + {20}(\frac{1}{p}-\frac{1}{8})\tilde{\varphi}^k)}_{= \xi^k} + 4\frac{1}{K}\sum\nolimits_k\|\bm x^k - \bm x^*\|^2 \leq 2\frac{1}{K}\sum\nolimits_k \xi^k + \frac{4}{\beta^2}\epsilon,
        \end{aligned}
    \end{equation}
    where the last inequality is due to the smoothness of $f(\cdot)$ with parameter $\beta$.
    Then by using Lemma \ref{proof.auxiliary} with $\varepsilon:= \eta\tau\beta$, we have
    \begin{equation}\nonumber
        \begin{aligned}
            \frac{1}{K}\sum\nolimits_{k=0}^{K-1}\xi^{k}  &\leq \xi^{0} -\xi^K + (1 - \frac{1}{48}p)\frac{1}{K}\sum\nolimits_{k=0}^{K-1}\xi^{k} + 2 c_p q \eta^4\tau^3\beta^2\varsigma^2 + 6 c_p q\eta^6\tau^5\beta^4\varsigma^2
            \\ & + (\frac{12c_p (1-{{p}})^2}{{{p}}} + 4c_p (1+7{{p}})\tilde{B}^2 +c_p Q B^2+ \frac{25}{12} B^2)\eta^2\tau^2 \frac{1}{K}\sum\nolimits_{k=0}^{K-1}\mathbb{E}\|\nabla f(\bm x^{k})\|^2 
            \\ & + (\frac{33c_p {{p}}}{4}+ \frac{(1+7{{p}})c_p}{m}+ \frac{25}{24})\eta^2\tau\varsigma^2 + \eta^2\tau^2((c_p Q + \frac{25}{12})G^2 +  4c_p (1+7{{p}})\tilde{G}^2),
        \end{aligned}
    \end{equation}
    where $\xi^0 = \varphi^0 + {20}(\frac{1}{p}-\frac{1}{8})\tilde{\varphi}^0 = \frac{1}{n\tau}\sum\nolimits_{i,j}\mathbb{E}\|{\bm v}_{i,j}^0-\bm x^0\|^2 + {20}(\frac{1}{p}-\frac{1}{8})\frac{1}{n}\sum\nolimits_{i}\mathbb{E}\|{\bm v}_{i,0}^0-\bm x^0\|^2= \frac{1}{n\tau}\sum\nolimits_{i,j}\mathbb{E}\|{\bm v}_{i,j}^0-\bm x^0\|^2 $.
    Finally we scale $\frac{1}{K}\sum\nolimits_k \xi^k$ as
    \begin{equation}\nonumber
        \begin{aligned}
            \frac{1}{K}\sum\nolimits_{k=0}^{K-1}\xi^{k} \leq& \frac{1}{p}\mathcal{O}\bigg(\xi^{0} + \frac{m\tau\epsilon}{K} (\frac{(1-{{p}})^2}{{{p}^2}} + (1+\frac{1}{p})\tilde{B}^2 + (1+\frac{1}{p}) B^2)+ \frac{m\tau}{K}(1+ \frac{1}{mp} + \frac{1}{m})\frac{\varsigma^2}{\tau} + \frac{m\tau}{K}((1+\frac{1}{p})G^2 
            \\ & +  (1+\frac{1}{p})\tilde{G}^2)+ (\frac{m\tau}{K})^2(1+\frac{1}{p}+\frac{(1-p)^2}{p^2}) \frac{\beta^2\varsigma^2}{\tau} + (\frac{m\tau}{K})^3(1+\frac{1}{p}+\frac{(1-p)^2}{p^2}) \frac{\beta^4\varsigma^2}{\tau}\bigg)
            \\ \leq& \mathcal{O}({\xi^{0}}) + \frac{1}{p}\mathcal{O}\bigg(\frac{m\tau\epsilon}{K} (1 + B^2)+ \frac{m\tau}{K}(1 + \frac{1}{m})\frac{\varsigma^2}{\tau} + \frac{m\tau}{K}G^2 + (\frac{m\tau}{K})^2\frac{\beta^2\varsigma^2}{\tau} + (\frac{m\tau}{K})^3 \frac{\beta^4\varsigma^2}{\tau}\bigg)
            % \\ \leq& \mathcal{O}({\xi^{0}}) + \mathcal{O}\bigg(\frac{\epsilon}{K} (1 + B^2)+ \frac{1}{K}((1 + \frac{1}{m})\frac{\varsigma^2}{\tau} + G^2) + \frac{\beta^2\varsigma^2}{K^2\tau} + \frac{\beta^4\varsigma^2}{K^3\tau}\bigg)
        \end{aligned}
    \end{equation}
    We complete part (i) by using $\mathcal{O}(\epsilon)$ to swallow the second part in the RHS of the last inequality.
    While part (ii) is trivial with (i), which is omitted in the proof.
\end{proof}

\end{document}